\documentclass{article}

\PassOptionsToPackage{numbers, compress}{natbib}


\usepackage[final]{neurips_2023}



\usepackage[utf8]{inputenc} 
\usepackage[T1]{fontenc}    
\usepackage{hyperref}       
\usepackage{url}            
\usepackage{booktabs}       
\usepackage{amsfonts}       
\usepackage{nicefrac}       
\usepackage{bbding}
\usepackage{microtype}      
\usepackage{xcolor}         
\usepackage{url}
\usepackage{multicol}
\usepackage{multirow}
\usepackage{courier}
\usepackage{setspace}
\usepackage{tikz}
\usetikzlibrary{graphs, positioning, quotes, shapes.geometric}
\usepackage{makecell}
\usepackage{amsmath,amsthm,amsfonts,amssymb}
\usepackage{bm}
\usepackage{appendix}
\usepackage{comment}
\usepackage{tablefootnote}
\usepackage{multirow}
\usepackage{hhline}
\usepackage{graphicx}
\usepackage{subfigure}
\usepackage{algorithm,algorithmic}
\usepackage[font=footnotesize,labelfont=bf]{caption}
\usepackage{hyperref}
\usepackage{cases}
\usepackage{enumitem}

\newtheorem{definition}{Definition}
\newtheorem{theorem}{Theorem}

\newtheorem{lemma}{Lemma}

\newtheorem{corollary}{Corollary}
\newtheorem{proposition}{Proposition}
\hypersetup{colorlinks=true,linkcolor=black}
\title{In Defense of Softmax Parametrization for Calibrated and Consistent Learning to Defer}

%

\author{%
  Yuzhou Cao$^{1}$~~~~~~Hussein Mozannar$^{2}$~~~~~~Lei Feng$^{1}$\thanks{Corresponding author: Lei Feng.}\\
  \textbf{Hongxin Wei$^{3}$~~~~~~Bo An$^{1}$}\\
  $^{1}$School of Computer Science and Engineering, Nanyang Technological University, Singapore\\
  $^{2}$CSAIL and IDSS, Massachusetts Institute of Technology, Cambridge, MA\\
  $^{3}$Department of Statistics and Data Science, Southern University of Science and Technology, China\\
  \texttt{yuzhou002@e.ntu.edu.sg, mozannar@mit.edu}\\
  \texttt{lfengqaq@gmail.com, weihx@sustech.edu.cn, boan@ntu.edu.sg}
}

\begin{document}

\maketitle

\begin{abstract}
Enabling machine learning classifiers to defer their decision to a downstream expert when the expert is more accurate will ensure improved safety and performance. This objective can be achieved with the learning-to-defer framework which aims to jointly learn how to classify and how to defer to the expert. In recent studies, it has been theoretically shown that popular estimators for learning to defer parameterized with softmax provide unbounded estimates for the likelihood of deferring which makes them uncalibrated. However, it remains unknown whether this is due to the widely used softmax parameterization and if we can find a softmax-based estimator that is both statistically consistent and possesses a valid probability estimator.
In this work, we first show that the cause of the miscalibrated and unbounded estimator in prior literature is due to the symmetric nature of the surrogate losses used and not due to softmax. We then propose a novel statistically consistent asymmetric softmax-based surrogate loss that can produce valid estimates without the issue of unboundedness. We further analyze the non-asymptotic properties of our method and empirically validate its performance and calibration on benchmark datasets.

\end{abstract}

\section{Introduction}

As machine learning models get deployed in risk-critical tasks such as autonomous driving \cite{auto}, content moderation \cite{content}, and medical diagnosis \cite{care}, we have a higher urgency to prevent incorrect predictions. To enable a safer and more accurate system, one solution is to allow the model to defer to a downstream human expert when necessary.
The \textbf{L}earning to \textbf{D}efer (L2D)  paradigm achieves this aim \cite{L2D1, L2D2, L2D3, L2D4, L2D5, L2D6, L2D7, L2D8, Conf2, Okati1, Okati2, Okati3} and enables models to defer to an expert, i.e., abstain from giving a prediction and request a downstream expert for an answer when needed. L2D aims to train an augmented classifier that can choose to defer to an expert when the expert is more accurate or make a prediction without the expert.
L2D can be formulated as a risk-minimization problem that minimizes the 0-1-deferral risk \cite{L2D2}, which incurs a cost of one when the classifier is incorrect or when we defer to an expert who errs and incurs a cost of zero otherwise.

Despite being formulated straightforwardly, the risk minimization problem is NP-hard even in simple settings \cite{Okati2,L2D6} due to the discontinuous and non-convex nature of the 0-1-deferral loss. To make the optimization problem tractable, many efforts have been made to design a continuous surrogate loss for the 0-1-deferral loss while guaranteeing statistical consistency, which means that the minimizer of the surrogate risk is that of the 0-1-deferral risk. In \citet{L2D2}, a cross-entropy-like surrogate loss is proposed that is statistically consistent. \citet{L2D4} further generalized the results in \citet{L2D2} and showed that all the losses that are consistent for ordinary multi-class classification, i.e., classification-calibrated \cite{cc1, cc2, cc3}, are consistent for L2D with certain reformulation, which makes L2D a more comprehensive framework.
By minimizing the risk \textit{w.r.t.} consistent surrogate losses \cite{L2D2, L2D4}, we can obtain a model that defers to who between the classifier and expert is more accurate on an instance basis.
However, we also need assessments of the uncertainty of the classifier when it predicts and the uncertainty of the expert prediction when we defer. This allows us to perform better triaging of samples and provide reliable estimates of uncertainty. 
\citet{L2D5} pointed out that the previous estimator in \cite{L2D2} generates highly biased probability estimates for the predictions due to its unboundedness. They then proposed a surrogate based on the One-versus-All (OvA) strategy \cite{dzd} to alleviate this problem, which allows both statistical consistency and improved calibration compared to the previous softmax-based method \cite{L2D2}.


\begin{figure*}[!t]
\centering
\subfigure[Unbounded Estimator \cite{L2D2}]{
\includegraphics[width=0.323\textwidth]{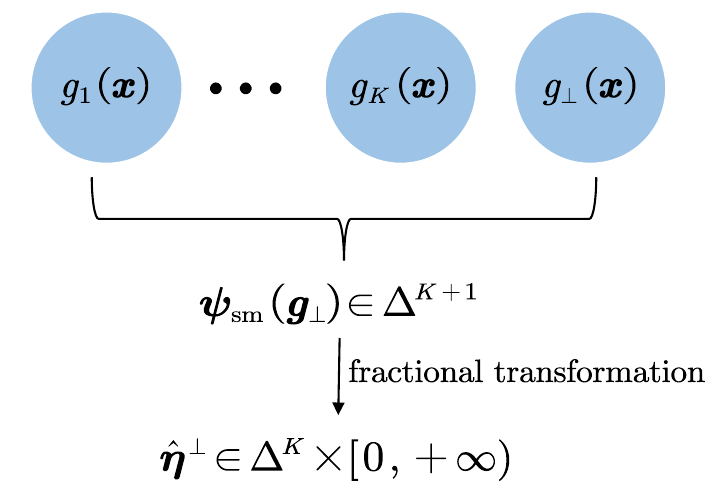}}
\subfigure[Uncorrelated Estimator \cite{L2D5}]{
\includegraphics[width=0.323\textwidth]{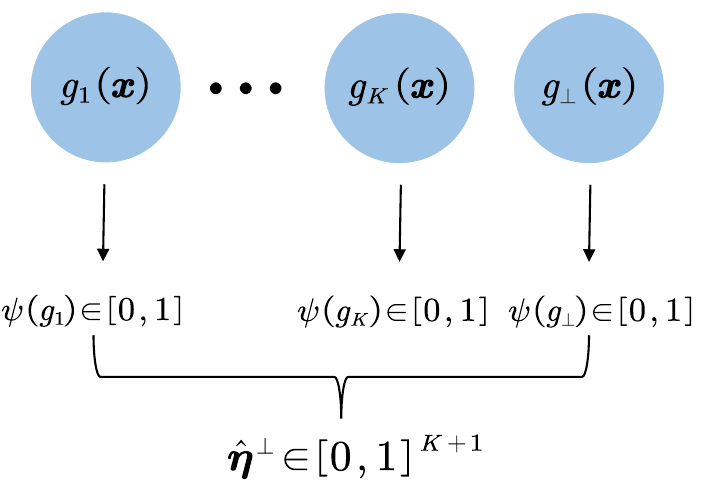}}
\subfigure[Proposed Estimator]{
\includegraphics[width=0.323\textwidth]{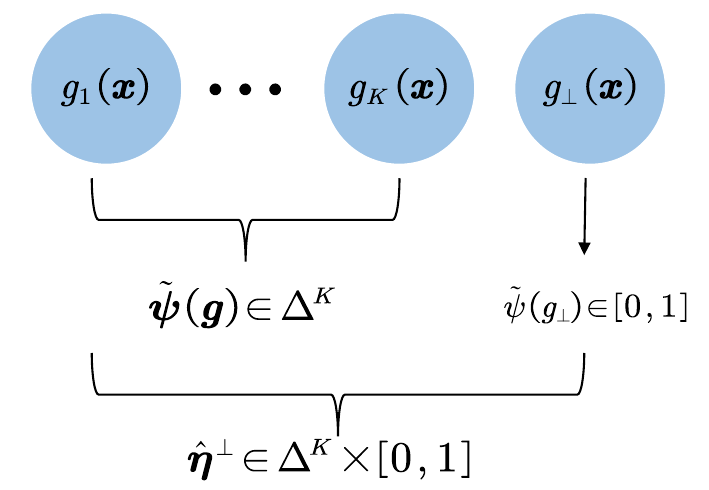}}
\vspace{-5pt}
\caption{Illustration of the proposed and previous estimators. Probability estimation in L2D aims to predict both the class probabilities and the expert's accuracy $\bm{\eta}\times{\rm Pr}(M=Y|X=\bm{x})\in\Delta^{K}\times[0,1]$, while only our proposed estimator takes exactly the same range.}
\label{F1}
\vspace{-15pt}
\end{figure*}
Does the difference in performance between OvA and softmax-based methods indicate that softmax parameterization is a sub-optimal choice in L2D? The cause of the unbounded probability estimation in \cite{L2D2} is in fact still elusive, and we cannot simply attribute it to softmax parameterization. Furthermore, notice that the softmax-based method \cite{L2D2} is only a specific implementation of applying softmax in L2D. Meanwhile, softmax parameterization is also a more straightforward modeling of multiclass posterior distributions: while the OvA strategy works by splitting a $K$-class classification problem into $K$ independent binary classification problems and thus induces $K$ uncorrelated probability estimators for each class, softmax parameterization works by directly estimating class posterior probabilities as a whole. Given the wide use and practical advantages of softmax parameterization in classification tasks and the resemblance between L2D and multiclass classification, it is also promising that the usage of the softmax function can result in competitive methods for L2D. Then a natural question arises: can we design a softmax-based consistent surrogate loss for L2D without triggering unbounded probability estimation enabling calibrated estimates?

In this paper, we give a positive answer to this question by providing a novel consistent surrogate loss with \textit{asymmetric softmax parameterization}, which can be seen as the combination of ordinary softmax function and an additional probability estimator. This surrogate can also induce a non-trivial bounded probability estimator for prediction probabilities and expert accuracy. To defend the use of softmax parameterization, we show that it is not the cause of the failure but rather it is the symmetric structure of the surrogates that leads to the unboundedness in probability estimation. Unlike the unbounded estimator \cite{L2D2} that mixes up class probabilities and the expert's accuracy and the OvA estimator that treats each class and expert independently, our method models the class probabilities as a whole with softmax and the expert's accuracy independently with a normalized function, which better reflects the structure of L2D with probability estimation. The differences between our work and previous works are illustrated in Figure \ref{F1}. We further analyze the limitation and broader impact of our work in Appendix \ref{limit}. Our contributions are four-fold and summarized below:
\begin{itemize}[leftmargin=0.5cm]
    \item In Section \ref{s3}, we prove that a non-trivial bounded probability estimator does not exist as long as we are using \textit{symmetric} loss functions.
    \item We propose an asymmetric formulation of softmax parameterization, and show that it can induce both a consistent surrogate loss and a non-trivial bounded probability estimator. We further show that the proposed asymmetric softmax-based method and OvA-based method \cite{L2D5} actually benefit from their asymmetric structure.
    \item We further study the regret transfer bounds of our proposed asymmetric surrogate and show that it is compatible with both L2D and multi-class classification.
    \item Experiments on datasets with both synthetic experts and real-world experts are conducted to demonstrate the usefulness of our method for both prediction and expert accuracy estimation.
\end{itemize}

\section{Preliminaries}
In this section, we review the problem setting of L2D and briefly introduce previous works.
\subsection{Problem Setting}

In this paper, we study the problem of learning to defer to an expert in the $K$-class classification scenario. Let us denote by $\mathcal{X}$ and $\mathcal{Y}=\{1,\cdots, K\}$ the feature and label space respectively. Denote by $X\!\times\! Y\!\times\!M\in\mathcal{X}\!\times\!\mathcal{Y}\times\mathcal{Y}$ the data-label-expert random variable triplet and $\bm{x},y,m$ are their realizations, which obeys an underlying distribution with density $p(\bm{x},y,m)$. We have access to data triplets $\{(\bm{x}_{i},y_{i},m_{i})\}_{i=1}^{n}$ that are drawn independently and identically from the distribution. The goal of L2D in the classification scenario is to obtain a classifier with deferral option $f(\cdot):\mathcal{X}\rightarrow\mathcal{Y}^{\bot}$, where $\bot$ is the option of deferral to the expert and $\mathcal{Y}^{\bot}$ is the augmented decision space $\mathcal{Y}\cup\{\bot\}$.

The problem evaluation is the following 0-1-deferral loss $\ell_{01}^{\bot}$\footnote{Though the formulation introduced here is not its most general version, we only study this one and its induced surrogates (\ref{loss}) since previous works on the design of surrogate losses for L2D \cite{L2D2,L2D4,L2D5} also concentrated on it.}, which is a generalized version of the zero-one loss $\ell_{01}(f(\bm{x}),y)=[\![f(\bm{x})\not=y]\!]$, 
which takes the value of 1 if we use the output of the classifier when it predicts an incorrect label $\left([\![f(\bm{x})\in\mathcal{Y}~{\rm and}~f(\bm{x})\not=y]\!]\right)$ or defer to the expert when they are incorrect $\left([\![f(\bm{x})=\bot~{\rm and}~m\not=y]\!]\right)$, where $[\![\cdot]\!]$ is the Iverson bracket notation suggested by Knuth \cite{Knuth}. Then we aim to minimize the risk \textit{w.r.t.} to this loss:
\begin{align}
\label{rm}
\min\nolimits_{f} R^{\bot}_{01}(f)=\mathbb{E}_{p(\bm{x},y,m)}\left[\ell_{01}^{\bot}(f(\bm{x}),y,m)\right].
\end{align}
Denote by $f^{*}$ the minimizer of the risk $R^{\bot}_{01}(f)$, i.e., the Bayes optimal solution, and $\bm{\eta}(\bm{x})=\{{\rm Pr}(Y=y|X=\bm{x})\}_{y=1}^{K}$. 
\citet{L2D2} provide a characterization of the form of the Bayes optimal solution as:

\begin{definition}\label{bayes}\rm (Bayes optimality of L2D) A classifier with deferral option $f^{*}\rightarrow\mathcal{Y}^{\bot}$ is the minimizer of $R_{01}^{c}(f)$ if and only if it meets the following condition almost surely:
$$
f^{*}(\bm{x})=
\begin{cases}
\bot,&\max_{y}\eta_{y}(\bm{x})<{\rm Pr}(M=Y|X=\bm{x}),\\
\mathop{\rm argmax}_{y}\eta_{y}(\bm{x}), &{\rm else}.
\end{cases}
$$
\end{definition}
The form of the optimal solution is intuitive: we should defer to the expert if they have a higher probability of being correct than the optimal classifier which follows the most likely label class given the input $\bm{x}$.
 This optimal solution can also be seen as the generalized version of Chow's rule in learning to reject \cite{chow}, where a fixed cost in $[0,1]$ serves as the known accuracy of an expert.

\subsection{Consistent Surrogate Losses for L2D }

Although we know the form of the Bayes optimal solution, practically the risk minimization problem above faces computational difficulties: the minimization of the discontinuous and non-convex problem (\ref{rm}) is NP-hard \cite{L2D6}. A widely used strategy for tackling this difficulty is to substitute the original loss function which is discontinuous 
with a continuous surrogate, which has been applied in many areas including ordinary classification \cite{dzd,cc3,cc2,ramaswamy2,jessi,cf}, multi-label classification \cite{gaozhou_mll,koyejo1,mingyuan,junzhu}, AUC optimization \cite{gaozhou_mll,koyejo1,menonauc}, cost-sensitive learning \cite{scott1,scott2}, top-$K$ classification \cite{lapin,koyejo2}, adversarially robust classification \cite{hermite2,mohri1,mohri2}, and learning to reject \cite{cortes1,cortes2,bartlettrej,yuanrej,chenri,charoenphakdee,cao}. Denote by $\bm{g}:\mathcal{X}\rightarrow\mathbb{R}^{K+1}$ the learnable scoring function that induces our decision function for L2D $f:\mathcal{X}\rightarrow\mathcal{Y}^{\bot}$ with the following transformation $\varphi:\mathbb{R}^{K+1}\rightarrow\mathcal{Y}^{\bot}$:
$$\varphi(\bm{g}(\bm{x}))=
\begin{cases}
\bot,&g_{K+1}(\bm{x})>\max_{y\in\mathcal{Y}}g_{y}(\bm{x})\\
\mathop{\rm argmax}_{y\in\mathcal{Y}}g_{y}(\bm{x}),&{\rm else}.\\
\end{cases}
$$

 We consider a continuous surrogate function $\Phi:\mathbb{R}^{K+1}\!\times\!\mathcal{Y}\!\times\!\mathcal{Y}\rightarrow\mathbb{R}^{+}$ and the surrogate risk below:
\begin{align}
\label{rm2}\textstyle
\min\nolimits_{\bm{g}} R^{\bot}_{\Phi}(\bm{g})=\mathbb{E}_{p(\bm{x},y,m)}\left[\Phi(\bm{g}(\bm{x}),y,m)\right].
\end{align}

Assuming we find a surrogate function that overcomes the computational optimization challenges, we need to verify the consistency of $\Phi$ \textit{w.r.t.} $\ell_{01}^{\bot}$, i.e., any minimizer of the surrogate risk also minimizes the original risk:
(\ref{rm2}):
$$\varphi\circ\bm{g}^{*}\in\mathop{\rm argmin}\nolimits_{f} R_{01}^{\bot}(f),~~~\forall \bm{g}^{*}\in\mathop{\rm argmin}\nolimits_{\bm{g}}R^{\bot}_{\Phi}(\bm{g}).$$
The first consistent surrogate \cite{L2D2} \textit{w.r.t.} $\ell_{01}^{\bot}$ is proposed by modifying the softmax cross-entropy loss:
\begin{align}\label{ce}
L_{\rm CE}(\bm{g}(\bm{x}),y,m)=-\log\psi^{\rm sm}_{y}(\bm{g}(\bm{x}))-[\![m=y]\!]\log\psi^{\rm sm}_{K+1}(\bm{g}(\bm{x})),
\end{align}
where $\bm{\psi}^{\rm sm}$ is the softmax function $\psi_{y}^{\rm sm}(\bm{u})=\exp(u_{y})/\sum_{y'=1}^{d}\exp(u_{y'})$, where $d$ is the dimensionality of the input. Inspired by the risk formulation above, \citet{L2D4} further generalized the family of consistent surrogate for $\ell_{01}^{\bot}$ by considering the following consistent surrogate reformulation:
\begin{align}\label{loss}
L_{\phi}(\bm{g}(\bm{x}),y,m)=\phi(\bm{g}(\bm{x}),y)+[\![m=y]\!]\phi(\bm{g}(\bm{x}),K+1).
\end{align}
It is known from Proposition 2 in \citet{L2D4} that surrogates take the form above are consistent \textit{w.r.t.} $\ell_{01}^{\bot}$ if $\phi$ is a classification-calibrated multi-class loss  \cite{cc2,cc3}. Using this result, we can directly make use of any statistically valid surrogate loss in ordinary classification with a simple modification. 

\subsection{Problems with Probability Estimation for L2D}
While prior literature has established how to construct consistent surrogates for L2D, it is less well-known whether such surrogates can actually provide calibrated estimates of classifier and deferral probabilities.
As mentioned before, we are also interested in the true probability of the correctness of our prediction, i.e., the value of $\bm{\eta}(\bm{x})$ (label being $Y$) and ${\rm Pr}(M=Y|X=\bm{x})$ (expert is correct). To achieve this goal of probability estimation, the commonly used method is to combine the obtained scoring function $\bm{g}^{*}$ with a transformation function $\psi$ to make the composite function $\psi\circ\bm{g}^{*}$ a probability estimator $\hat{\bm{\eta}}^{\psi}(\bm{g}(\bm{x}))$. For example, the softmax function is frequently used as $\psi$ in ordinary multi-class classification to map the scoring function from $\mathbb{R}^{K}$ to $\Delta^{K}$. So far, we have not discussed whether these probability estimates are valid. 

In the task of L2D for classification, we aim to estimate both $\bm{\eta}(\bm{x})\in\Delta^{K}$ and ${\rm Pr}(M=Y|X=\bm{x})\in[0,1]$ with a $K+1$-dimensional estimator, where its $y$-th dimension is the estimate of $\eta_{y}(\bm{x})$ for $y\in\mathcal{Y}$ and $K+1$-th dimension is the estimate of ${\rm Pr}(M=Y|X=\bm{x})$. Correspondingly, the transformation $\psi$ should be from $\mathbb{R}^{K+1}$ to a range $\mathcal{P}$ that contains $\Delta^{K}\times[0,1]$. It was shown in Theorem 1 of \citet{L2D2} that a probability estimator $\hat{\bm{\eta}
}^{\rm sm}$ with the softmax output of scorer function $\bm{g}$ and an extra fractional transformation guarantees to recover class probabilities and expert accuracy at $\bm{g}^{*}\in\mathop{\rm argmin}_{\bm{g}}R_{L_{CE}}(\bm{g})$, which is formulated as:
\begin{align}
\label{ub}
\hat{\eta}^{\rm sm}_{y}(\bm{g}(\bm{x}))=\psi^{\rm sm}_{y}(\bm{g}(\bm{x}))/\left(1-\psi^{\rm sm}_{K+1}(\bm{g}(\bm{x}))\right),~~~~\forall y\in\mathcal{Y}\cup\{K+1\},
\end{align}
 It is noticeable that the range of this estimator is $\hat{\bm{\eta}}^{\rm sm}\in\Delta^{K}\times[0,+\infty]$: the estimate of expert accuracy $\hat{\eta}^{sm}_{K+1}$ is \textbf{unbounded above} and will approach $+\infty$ if $\psi^{\rm sm}_{K+1}(\bm{u})\rightarrow 1$. \citet{L2D5} pointed out that the unboundedness can hurt the performance of this probability estimator: if $\psi^{\rm sm}_{K+1}(\bm{g}(\bm{x}))>1/2$, the estimated expert accuracy will be greater than 1 and thus meaningless. Experimental results also show the frequent occurrence of such meaningless results due to the overconfidence of deep models \cite{OC}. We further illustrate such miscalibration in Figure \ref{F2}.

To mitigate this problem, a new OvA-based surrogate loss that can induce a bounded probability estimator while remaining consistent is proposed in \cite{L2D5}, which has the following formulation:
\begin{align}\label{OvA} L_{\mathrm{OvA}}(\bm{g}(\bm{x}),y,m)\!=\!\xi(g_{y}(\bm{x}))\!+\!\sum\nolimits_{y'\not=y}^{K+1}\xi(-g_{y'}(\bm{x}))\!+\![\![m=y]\!](\xi(g_{K\!+\!1}(\bm{x}))\!-\!\xi(-g_{K\!+\!1}(\bm{x}))),
\end{align}
where $\xi$ is a binary proper composite \cite{pc1, pc2} loss. Its induced probability estimator $\hat{\bm{\eta}}^{\rm OvA}$ is:
\begin{align}
\hat{\eta}^{\rm OvA}_{y}(\bm{g}(\bm{x}))=\psi_{\xi}(g_{y}(\bm{x})),~~~~\forall y\in\mathcal{Y}\cup\{K+1\},
\end{align}
where $\psi_{\xi}$ is a mapping to $[0,1]$ determined by the binary loss, which makes $\hat{\bm{\eta}}^{\rm OvA}$ bounded. It is also experimentally shown that the $\hat{\bm{\eta}}^{\rm OvA}$ outperformed $\hat{\bm{\eta}}^{\rm sm}$ \cite{L2D2} in the task of probability estimation.

Given the success of the OvA-based loss, can we assert that softmax parameterization is inferior to the OvA strategy in probability estimation for L2D? We argue in this paper that such a conclusion should not be drawn so quickly. 
We should not discourage the use of softmax parameterization based solely on the specific implementation (\ref{ce}), since there may be other implementations of softmax that can resolve the issues with $\hat{\bm{\eta}}^{\rm sm}$. Furthermore, the great success of softmax parameterization in deep learning models suggests that this potential implementation could also achieve outstanding performance. In this paper, we focus on finding such an implementation and show its superiority both theoretically and experimentally.
\section{Problem with Symmetric Losses for L2D with Probability Estimation}
\label{s3}

Before beginning the search for a suitable implementation of softmax parameterization for L2D, it is important to determine the cause of the unboundedness of $\hat{\bm{\eta}}^{\rm sm}$. Is it due to the use of softmax parameterization or some other factor? If the reason is the former one, any attempts to create a bounded probability estimator with a softmax function for L2D will be in vain. Therefore, it is crucial to identify the root cause of the unboundedness before proceeding with further efforts. 

Surprisingly, we find that such unboundedness is not a particular problem of softmax parameterization and is a common one shared by many loss functions. Recall that the loss (\ref{ce}) with unbounded estimator $\hat{\bm{\eta}}^{\rm sm}$ is a special case of the consistent surrogate (\ref{loss}) by setting the multi-class loss $\phi$ to softmax cross-entropy loss. We show that even if we use other losses beyond  the softmax function and choose other losses such as  the standard OvA losses ((14) in \citet{dzd}), the induced probability estimators are still inevitably unbounded (or bounded but induced from an unbounded one) as long as we are using $\phi$ with symmetric structure:

\begin{figure*}[!t]
\centering
\includegraphics[width=1\textwidth]{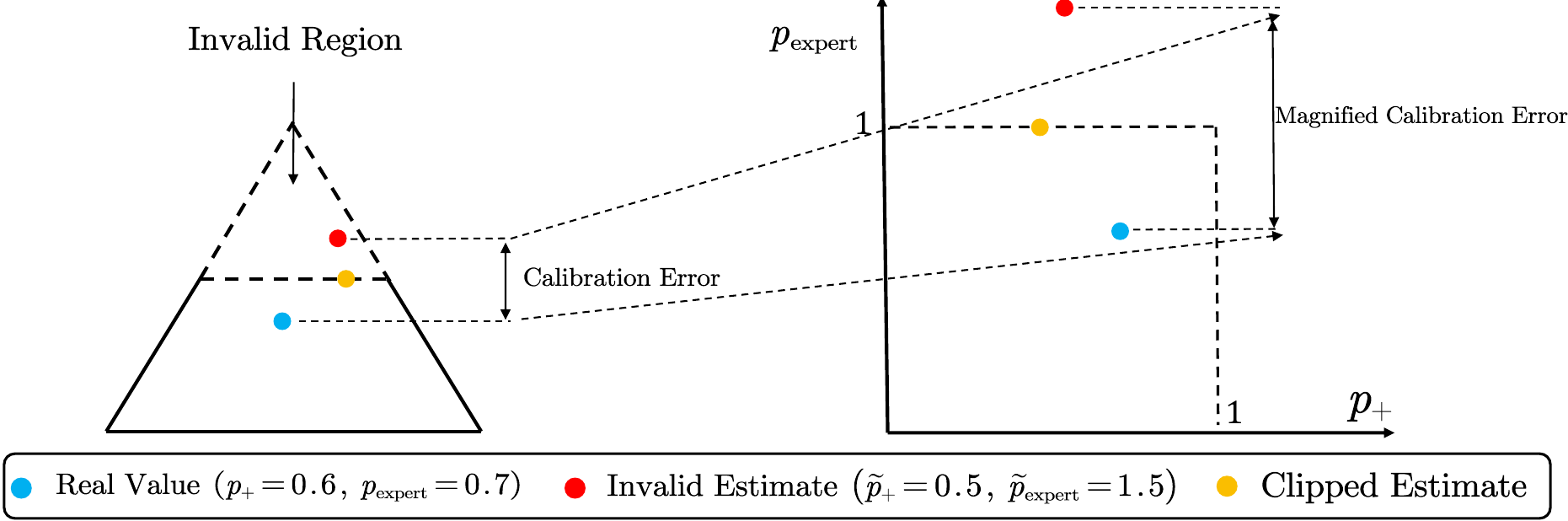}
\vspace{-5pt}
\caption{Illustration of the miscalibration of estimator (\ref{ub}) on a binary classification with deferral problem. The unbounded estimator (\ref{ub}) first estimates $\frac{[\bm{\eta}(\bm{x});{\rm Pr}(M=Y|X=\bm{x})]}{1+{\rm Pr}(M=Y|X=\bm{x})}$ that takes value in the 2-D probability simplex denoted by the left triangle and then obtain the final estimate with the fractional transformation (\ref{ub}). However, when over-confidence occurs in the 2-D simplex, i.e., the softmax output lies in the invalid region that $\psi^{\rm sm}_{3}>1/2$, (\ref{ub}) will magnify the calibration error of the estimates and clipping it to a valid estimate cannot solve this problem.}
\label{F2}
\vspace{-15pt}
\end{figure*}

\begin{theorem}\label{t1}\rm (Impossibility of non-trivial bounded probability estimator with symmetric losses)\\ If a surrogate loss $L_{\phi}$ defined as (\ref{loss}) has probability estimators and is induced from a symmetric consistent multi-class loss $\phi$ such that that $P\phi(\bm{u})=\phi(P\bm{u})$, where $P$ is any permutation matrix, it must have an unbounded probability estimator $\hat{\bm{\eta}}$: let $\bm{g}^{*}\in\mathop{\rm argmin}R_{L_{\phi}}(\bm{g})$, we have that $\forall \hat{\bm{\eta}}(\cdot)$ where $\hat{\bm{\eta}}(\bm{g}^{*}(\bm{x}))=[\bm{\eta}(\bm{x});{\rm Pr}(M=Y|X=\bm{x})]$, $\hat{\bm{\eta}}$ is not bounded above. Furthermore, any bounded probability estimator $\hat{\bm{\eta}}'$ for $L_{\phi}$ must be a piecewise modification of the unbounded estimator $\hat{\bm{\eta}}$: denote by $\mathcal{U}=\mathop{\cup}_{\bm{\beta}}\mathop{\rm argmin}_{\bm{u}}\sum_{y=1}^{K+1}\beta_{y}L_{\phi}(\bm{u},y)$ for all $\bm{\beta}\in\Delta^{K}\times[0,1]$, we have that $\hat{\bm{\eta}}'$ is equal to $\hat{\bm{\eta}}$ on $\mathcal{U}$.
\end{theorem}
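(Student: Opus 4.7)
My plan is to reduce the L2D problem to ordinary multi-class classification by factoring out $1+\pi$, then use the symmetry of $\phi$ to extend the inverse link to the full simplex, and finally exhibit the resulting singularity. First I would compute the conditional $L_\phi$-risk at a point $\bm{x}$, which with $\pi(\bm{x})={\rm Pr}(M=Y\mid X=\bm{x})$ reads $\sum_{y=1}^{K}\eta_y(\bm{x})\phi(\bm{u},y)+\pi(\bm{x})\phi(\bm{u},K+1)=(1+\pi)\,\tilde{\bm{\beta}}^\top\phi(\bm{u})$, where $\tilde{\bm{\beta}}:=[\bm{\eta};\pi]/(1+\pi)\in\Delta^{K+1}$ has $\tilde{\beta}_{K+1}=\pi/(1+\pi)\in[0,1/2]$ on every valid L2D target. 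Since rescaling preserves argmins, the Bayes-optimal $\bm{u}^{*}$ coincides with the Bayes-optimal scorer for ordinary $(K\!+\!1)$-class $\phi$-classification under posterior $\tilde{\bm{\beta}}$, so $\bm{u}^{*}$ is a function of $\tilde{\bm{\beta}}$ alone. Inverting gives the canonical recovery $[\bm{\eta};\pi]=\tilde{\bm{\beta}}/(1-\tilde{\beta}_{K+1})$, and any probability estimator satisfying the hypothesis must therefore reduce, on the optimal set, to $\hat{\bm{\eta}}(\bm{u})=\tilde{\bm{\beta}}(\bm{u})/(1-\tilde{\beta}_{K+1}(\bm{u}))$ with $\tilde{\bm{\beta}}(\cdot)$ the inverse link of the symmetric multi-class loss $\phi$.

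I would then use the symmetry assumption $P\phi(\bm{u})=\phi(P\bm{u})$ to obtain the permutation-equivariance $\mathop{\rm argmin}_{\bm{u}}(P\tilde{\bm{\beta}})^\top\phi(\bm{u})=P\mathop{\rm argmin}_{\bm{u}}\tilde{\bm{\beta}}^\top\phi(\bm{u})$. Combining this with the elementary fact that every $\tilde{\bm{\beta}}\in\Delta^{K+1}$ has a coordinate of mass at most $1/(K+1)\le 1/2$, so that the smallest coordinate can always be swapped into the deferral slot by a permutation, I conclude $\mathcal{U}^{+}:=\bigcup_{\tilde{\bm{\beta}}\in\Delta^{K+1}}\mathop{\rm argmin}_{\bm{u}}\tilde{\bm{\beta}}^\top\phi(\bm{u})=\bigcup_{P}P\mathcal{U}$, and that the inverse link $\tilde{\bm{\beta}}(\cdot)$ extends canonically to $\mathcal{U}^{+}$ and surjects onto $\Delta^{K+1}$. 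I can then pick $\bm{u}^{(n)}\in\mathcal{U}^{+}$ with $\tilde{\beta}_{K+1}(\bm{u}^{(n)})\to 1$; feeding this into the canonical recovery yields $\hat{\eta}_{K+1}(\bm{u}^{(n)})=\tilde{\beta}_{K+1}(\bm{u}^{(n)})/(1-\tilde{\beta}_{K+1}(\bm{u}^{(n)}))\to+\infty$, proving the first claim.

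For the piecewise-modification characterization the argument is essentially a tautology: $\mathcal{U}$ is by construction the set of $\bm{u}$ on which the hypothesis pins $\hat{\bm{\eta}}$ to $[\bm{\eta};\pi]$, and the canonical $\hat{\bm{\eta}}$ already takes this value on $\mathcal{U}$, so any alternative estimator $\hat{\bm{\eta}}'$ satisfying the hypothesis must coincide with $\hat{\bm{\eta}}$ on $\mathcal{U}$; off $\mathcal{U}$ the hypothesis is vacuous and one may freely replace the formula by any bounded extension (for instance clipping to $[0,1]^{K+1}$) without losing validity, which is precisely the claimed structure.

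The step I expect to be the main obstacle is the symmetry-driven extension above: one has to argue carefully that the inverse link $\tilde{\bm{\beta}}(\cdot)$ is genuinely well-defined on the whole $\mathcal{U}^{+}$ and not only on $\mathcal{U}$, and that it is surjective onto $\Delta^{K+1}$, since otherwise $\tilde{\beta}_{K+1}$ could stay bounded away from $1$ and the estimator would be globally bounded. The symmetry of $\phi$ provides this extension through permutation-equivariance of argmins and is precisely what forces the blow-up in the recovery denominator, confirming the paper's thesis that the failure is attributable to symmetry rather than to softmax itself.
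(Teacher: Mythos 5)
Your proposal is correct and follows essentially the same route as the paper's proof: reduce the conditional L2D risk to $(K\!+\!1)$-class classification under the rescaled posterior $[\bm{\eta};\pi]/(1+\pi)$, use the permutation-equivariance granted by symmetry (together with consistency of $\phi$) to extend the induced multiclass inverse link from the half-simplex $\{\tilde{\beta}_{K+1}\le 1/2\}$ to all of $\Delta^{K+1}$, and then let the fractional recovery $\tilde{\bm{\beta}}/(1-\tilde{\beta}_{K+1})$ blow up as $\tilde{\beta}_{K+1}\to 1$; the agreement-on-$\mathcal{U}$ clause is handled by the same observation that the minimizer set is unchanged. The only presentational difference is that the paper makes the permutation choice explicitly a function of $\bm{u}$ (via the test $u_{K+1}=\max_y u_y$), which is the careful well-definedness point you correctly flagged as the main obstacle.
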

The proof is provided in Appendix \ref{pt1}. Intuitively, the existence of such an unbounded estimator is caused by the fact that symmetric loss induced $L_{\phi}$ neglects the asymmetric structure of the probabilities to be estimated by modeling $\frac{[\bm{\eta}(\bm{x});{\rm Pr}(M=Y|X=\bm{x})]}{1+{\rm Pr}(M=Y|X=\bm{x})}$ directly. 
Furthermore, all potential bounded probability estimators are piecewise modifications of the unbounded estimator sharing the same values on $\mathcal{U}$ with $\hat{\bm{\eta}}$, which means that they are generated by substituting the invalid values of the unbounded estimator with a valid one, e.g., we can obtain a bounded estimator based on the unbounded one (\ref{ub}) by clipping it to 1 if its estimated expert accuracy is larger than 1. 

However, such modifications to make an unbounded estimator bounded are not very useful. Due to the training process's complexity and the distribution's arbitrariness, it is hard to design an appropriate modification of an unbounded estimator. For example, though we can bound the expert accuracy of (\ref{ub}) by 1 when it generates an invalid value that is larger than 1, it is still an overly confident estimation. Meanwhile, we do not know how to upper bound it by a value lower than 1 without prior knowledge of ${\rm Pr}(M=Y|X=\bm{x})$. 
Overall, training with $L_{\phi}$ in Theorem \ref{t1} can easily lead to a model that generates invalid probability estimation with an unbounded estimator $\hat{\bm{\eta}}$, while most modified bounded estimators cannot efficiently solve this problem. We will experimentally verify this point in Section \ref{s5}.

This result shows that the consistent surrogate framework \cite{L2D4} does not directly provide us a solution for solving the unboundedness problem since most multi-class losses we use, e.g., CE loss, OvA loss, Focal loss \cite{focal}, are symmetric. This motivates the design of surrogates with bounded probability estimators for L2D in our work. Based on Theorem \ref{t1}, we have the following findings: firstly unboundedness of probability estimators is not necessarily caused by the softmax parameterization, as the induced estimator of any symmetric loss, e.g., standard softmax-free symmetric OvA losses \cite{dzd}, exhibits unboundedness too. Secondly, due to the fact that the OvA strategy for L2D (\ref{OvA}) successfully gets rid of the unboundedness issue, we can reasonably deduce that a modified softmax function can also present similar outcomes. In the following section, we propose a novel modification of the softmax parameterization that overcomes these issues and reveals that both the previous OvA-based work \cite{L2D5} and our current work benefit from the  use of asymmetric losses.
\section{Consistent Softmax-Based Surrogate with Bounded Probability Estimator}
\label{s4}

In this section, we find a softmax parameterization that is a feasible solution in L2D for the construction of surrogate loss with a bounded probability estimator. We first propose a cross-entropy-like surrogate loss but with an asymmetric transformation that resembles the ordinary softmax function for its first $K$ elements but processes the $K+1$-th coordinate in a special way. Then we show that such a transformation has useful properties for both classification and probability estimation, and then provide consistency analysis for both the loss function and its induced bounded probability estimator. Finally, we show that our proposed loss and the OvA loss \cite{L2D5} are connected to the consistent surrogate formulation (\ref{loss}) with the use of asymmetric multi-class losses, which indicates the helpfulness of asymmetry and shed light on the future study of consistent surrogates for L2D with bounded estimators.

\subsection{Consistent Surrogate Formulation with Asymmetric Bounded Softmax Parameterization}
Before searching for a softmax-parameterization that is capable of L2D with probability estimation, we need to better understand the failure of (\ref{ce}) for inducing a bounded probability estimator. By inspecting the proof of consistency in Theorem 1 of \citet{L2D2}, we can learn that the cross-entropy-like surrogate (\ref{ce}) works by fitting a $K+1$ class posterior distribution: $\psi^{\rm sm}(\bm{g}^{*})=\tilde{\bm{\eta}}(\bm{x})=[\frac{\eta_{1}(\bm{x})}{1+{\rm Pr}(M=Y|\bm{x})},\cdots,\frac{\eta_{K}(\bm{x})}{1+{\rm Pr}(M=Y|\bm{x})},\frac{{\rm Pr}(M=Y|\bm{x})}{1+{\rm Pr}(M=Y|\bm{x})}]$. Though we can check that the $\varphi\circ{\bm{g}^{*}}$ is exactly a Bayes optimal solution of $R_{01}^{\bot}(f)$ due to the monotonicity of the softmax function and the form of $\tilde{\bm{\eta}}$, to get $\left[\bm{\eta}(\bm{x});{\rm Pr}(M=Y|X=\bm{x})\right]$ we need to perform an extra inverse transformation should be exerted on $\psi^{\rm sm}$. This is caused by the following dichotomy: the class probabilities and expert accuracy we aim to estimate are in the range of $\Delta^{K}\times[0,1]$, while the standard softmax-parameterization of (\ref{ce}) maps the $K+1$ dimensional scoring function $\bm{g}$ into $\Delta^{K+1}$. 

To solve this issue, a promising approach is to modify the softmax function to make it a transformation that can directly cast the scoring function $\bm{g}$ into the target range $\Delta^{K}\times[0,1]$. Since the target range is not symmetric, the modified softmax should also be asymmetric. This idea is given a concrete form in the following asymmetric softmax parameterization:
\begin{definition}\label{as}\rm (Asymmetric softmax parameterization) $\tilde{\bm{\psi}}$ is called a \textit{asymmetric softmax function} that for any $K>1$ and $\bm{u}\in\mathbb{R}^{K+1}$:
\begin{equation}
\tilde{\psi}_{y}(\bm{u})=
\begin{cases}
\frac{\exp(u_{y})}{\sum_{y'=1}^{K}\exp(u_{y'})}, &y\not=K+1,\\
\frac{\exp(u_{K+1})}{\sum_{y'=1}^{K+1}\exp(u_{y'})-\max_{y'\in\{1,\cdots,K\}}\exp(u_{y'})},&{\rm else.}
\end{cases}
\end{equation}
\end{definition}
At first glance, the proposed asymmetric function appears to be the standard softmax function, which excludes the $K+1$-th input of $\bm{u}$ for $y\neq K+1$. However, the last coordinate of the function takes on a quite different form. This special asymmetric structure is designed to satisfy the following properties:
\begin{proposition}\label{p1}\rm (Properties of $\tilde{\psi}$) For any $\bm{u}\in\mathbb{R}^{K+1}$:\\
(i). ~(Boundedness) $\tilde{\psi}(\bm{u})\in\Delta^{K}\times[0,1]$,\\
(ii). (Maxima-preserving) $\mathop{\rm argmax}\nolimits_{y\in\{1,\cdots,K+1\}}\tilde{\psi}_{y}(\bm{u})=\mathop{\rm argmax}\nolimits_{y\in\{1,\cdots,K+1\}}u_{y}$.
\end{proposition}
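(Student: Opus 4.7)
The plan is to verify each of the two properties of $\tilde{\bm{\psi}}$ separately, both by direct inspection of the defining formulas, with essentially no machinery required.

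For boundedness (i), I would first note that the first $K$ components of $\tilde{\bm{\psi}}(\bm{u})$ coincide exactly with the standard softmax applied to the restricted vector $(u_{1}, \ldots, u_{K})$. Hence these coordinates are nonnegative and sum to one, placing $(\tilde{\psi}_{1}(\bm{u}), \ldots, \tilde{\psi}_{K}(\bm{u}))$ in $\Delta^{K}$. For the final coordinate, I would pick $y^{*} \in \mathop{\rm argmax}_{y' \leq K} u_{y'}$ and rewrite the denominator of $\tilde{\psi}_{K+1}(\bm{u})$ as $\exp(u_{K+1}) + \sum_{y' \in \{1,\ldots,K\}\setminus\{y^{*}\}} \exp(u_{y'})$, which is at least $\exp(u_{K+1}) > 0$. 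This immediately yields $\tilde{\psi}_{K+1}(\bm{u}) \in (0,1]$, completing (i).

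For the maxima-preserving property (ii), I would reduce everything to a single scalar comparison. Among the first $K$ coordinates, the relative ordering of $\tilde{\bm{\psi}}(\bm{u})$ matches the relative ordering of $u_{1}, \ldots, u_{K}$ because those coordinates are an ordinary softmax; so the argmax sets on the two sides already agree when restricted to $\{1, \ldots, K\}$. The only remaining task is to compare the largest of the first $K$ coordinates, $\tilde{\psi}_{y^{*}}(\bm{u})$, against $\tilde{\psi}_{K+1}(\bm{u})$. Writing $a = \exp(u_{y^{*}})$, $b = \exp(u_{K+1})$, and $S = \sum_{y' \leq K} \exp(u_{y'})$, a direct calculation factors the difference as
\[
\tilde{\psi}_{y^{*}}(\bm{u}) - \tilde{\psi}_{K+1}(\bm{u}) = \frac{(a-b)(S-a)}{S\,(S + b - a)}.
\]
Both denominator factors are strictly positive, and because $K > 1$ the sum $S - a = \sum_{y' \neq y^{*}} \exp(u_{y'})$ contains at least one strictly positive term, so $S - a > 0$ strictly. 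Consequently the sign of the difference matches the sign of $u_{y^{*}} - u_{K+1}$.

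I would then finish with a brief case split on whether $\mathop{\rm argmax}_{y} u_{y}$ lies entirely in $\{1,\ldots,K\}$, equals $\{K+1\}$, or contains $K+1$ together with some index in $\{1,\ldots,K\}$. In each case the signed comparison above, combined with the ordinary-softmax ordering for the first $K$ coordinates, forces the argmax set of $\tilde{\bm{\psi}}(\bm{u})$ to coincide with that of $\bm{u}$, ties included. The only real ``obstacle'' is the algebraic factorization displayed above, which is routine once one isolates the terms appropriately; the conceptual point is that decoupling $u_{K+1}$ from the denominator of the first $K$ components caps $\tilde{\psi}_{K+1}$ at one while preserving its strict monotonicity in $u_{K+1}$, and the assumption $K > 1$ is exactly what prevents $S - a \geq 0$ from degenerating to equality.
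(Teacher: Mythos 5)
Your proof is correct and follows essentially the same route as the paper's: boundedness of the last coordinate by rewriting its denominator as $\exp(u_{K+1})+\sum_{y'\neq y^{*}}\exp(u_{y'})$, and maxima-preservation by reducing to the single comparison between $\tilde{\psi}_{y^{*}}$ and $\tilde{\psi}_{K+1}$ and showing its sign matches that of $u_{y^{*}}-u_{K+1}$. Your explicit factorization $\frac{(a-b)(S-a)}{S(S+b-a)}$ is a slightly cleaner way to carry out the same comparison the paper does via the reciprocal form $\frac{1}{1+(S-a)/b}$ versus $\frac{1}{1+(S-a)/a}$, and it handles ties transparently.
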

It can be seen that the proposed asymmetric softmax $\tilde{\psi}$ is not only bounded but also successfully maps $\bm{g}$ into the desired target range, which indicates that $\tilde{\psi}$ may directly serve as the probability estimator $\hat{\bm{\eta}}^{\tilde{\psi}}(\bm{g}(\bm{x}))=\tilde{\psi}(\bm{g}(\bm{x}))$. Furthermore, the maxima-preserving property guarantees that the estimator is also capable of discriminative prediction: if a scoring function $\bm{g}'$ can recover the true probability, i.e., $\hat{\bm{\eta}}^{\tilde{\psi}}(\bm{g}'(\bm{x}))=[\bm{\eta}(\bm{x});{\rm Pr}(M=Y|X=\bm{x})]$, then $\varphi\circ\bm{g}'$ must also be the Bayes optimal solution of $R_{01}^{\bot}(\bm{g})$. Based on the asymmetric softmax function, we propose the following surrogate loss for L2D:
\begin{definition}\rm (Asymmetric Softmax-Parameterized Loss) The proposed surrogate for L2D with asymmetric softmax parameterization $L_{\tilde{\psi}}(\bm{u},y,m):\mathbb{R}^{K+1}\times\mathcal{Y}\times\mathcal{Y}$ is formulated as:
\begin{align}\label{asl}
L_{\tilde{\psi}}(\bm{u},y,m)\!=\!-\log(\tilde{\psi}_{y}\left(\bm{u}\right))-[\![m\not=y]\!]\log(1-\tilde{\psi}_{K+1}(\bm{u}))-[\![m=y]\!]\log(\tilde{\psi}_{K+1}(\bm{u})).
\end{align}
\end{definition}
The proposed loss takes an intuitive form, which can be seen as the combination of the cross-entropy loss (the first term) and a modified version of binary logistic loss (the last two terms). This formulation is inspired by the structure of our problem, where the expert accuracy is not directly related to class probabilities while the class probabilities should fulfill that $\bm{\eta}(\bm{x})\in\Delta^{K}$. In fact, the proposed surrogate is not a simple summation of two independent losses and they are indirectly related by the asymmetric softmax function: according to Definition \ref{as}, the two counterparts share the same elements $[g_{1},\cdots,g_{K}]$. This correlation serves as a normalization for $\tilde{\psi}_{K+1}$ that brings the property of maxima-preserving, which is crucial for the following consistency analysis:
\begin{theorem}\label{main}\rm (Consistency of $L_{\tilde{\psi}}$ and bounded probability estimator $\hat{\bm{\eta}}^{\tilde{\psi}}$)\\ The proposed surrogate $L_{\tilde{\psi}}$ is a consistent surrogate for L2D, i.e., $\varphi\circ\bm{g}^{*}\in\mathop{\rm argmin}\nolimits_{f} R_{01}^{\bot}(f),~~~\forall \bm{g}^{*}\in\mathop{\rm argmin}\nolimits_{\bm{g}}R^{\bot}_{\Phi}(\bm{g}).$ The bounded probability estimator can also recover the desired class probabilities and expert accuracy: $\hat{\bm{\eta}}^{\tilde{\psi}}(\bm{g}^{*}(\bm{x}))=\left[\bm{\eta}(\bm{x});{\rm Pr}(M=Y|X=\bm{x})\right],~\forall\bm{x}\in\mathcal{X}$. {Furthermore, if there exists other probability estimators for $L_{\tilde{\psi}}$, they must also be bounded}.
\end{theorem}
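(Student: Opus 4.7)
The plan is to analyze the pointwise conditional risk
\[C(\bm{u},\bm{x}) = -\sum_{y=1}^K \eta_y(\bm{x}) \log \tilde{\psi}_y(\bm{u}) - p(\bm{x}) \log \tilde{\psi}_{K+1}(\bm{u}) - (1-p(\bm{x})) \log(1-\tilde{\psi}_{K+1}(\bm{u})),\]
where $p(\bm{x}) := {\rm Pr}(M=Y|X=\bm{x})$, and then let the asymmetric structure of $\tilde{\psi}$ deliver all three claims at once. The key observation is that $\tilde{\psi}_y$ for $y\leq K$ depends only on $u_1,\ldots,u_K$ through the standard softmax restricted to those coordinates, so the first term is a pure $K$-class cross-entropy in $u_{1:K}$, while the remaining two terms form a binary cross-entropy in the scalar $\tilde{\psi}_{K+1}(\bm{u}) = e^{u_{K+1}}/(S(u_{1:K}) + e^{u_{K+1}})$ with $S(u_{1:K}) := \sum_{y=1}^K e^{u_y} - \max_y e^{u_y} > 0$.

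I would then minimize the two pieces separately. The cross-entropy part attains its minimum iff $\tilde{\psi}_y(\bm{u}) = \eta_y(\bm{x})$ for all $y\leq K$, which is achievable because $\bm{\eta}(\bm{x})\in\Delta^K$ and the softmax surjects onto its relative interior. For any such $u_{1:K}$, the map $u_{K+1}\mapsto \tilde{\psi}_{K+1}(\bm{u})$ is a strictly increasing surjection onto $(0,1)$, so $u_{K+1}$ can be chosen to realize $\tilde{\psi}_{K+1}(\bm{u}) = p(\bm{x})$; the two subproblems decouple since the minimum $H(p(\bm{x}))$ of the binary cross-entropy does not depend on $u_{1:K}$. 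Hence at every minimizer $\bm{g}^*(\bm{x})$ we have $\tilde{\psi}(\bm{g}^*(\bm{x})) = [\bm{\eta}(\bm{x});p(\bm{x})]$, giving the probability-estimation claim. Consistency then drops out of the maxima-preserving property in Proposition \ref{p1}(ii): $\mathop{\rm argmax}_y g^*_y(\bm{x}) = \mathop{\rm argmax}_y \tilde{\psi}_y(\bm{g}^*(\bm{x})) = \mathop{\rm argmax}_y [\bm{\eta}(\bm{x});p(\bm{x})]_y$, so $\varphi\circ\bm{g}^*$ defers precisely when $p(\bm{x}) > \max_y \eta_y(\bm{x})$ and otherwise returns $\mathop{\rm argmax}_y \eta_y(\bm{x})$, which is exactly the Bayes rule of Definition \ref{bayes}.

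For the last claim I would show that $\mathcal{U} = \mathbb{R}^{K+1}$: given any $\bm{u}$, set $\bm{\beta} := \tilde{\psi}(\bm{u})$, which lies in $\Delta^K \times [0,1]$ by Proposition \ref{p1}(i); by the preceding paragraph $\bm{u}$ is itself a minimizer of the conditional risk at this $\bm{\beta}$, so $\bm{u}\in\mathcal{U}$. Consequently any probability estimator $\hat{\bm{\eta}}'$, being pinned to equal $\bm{\beta}$ at each minimizer for $\bm{\beta}$, must coincide with $\tilde{\psi}$ pointwise and therefore takes values in $\Delta^K\times[0,1]$, i.e., is bounded. The step I expect to be most delicate is ensuring that the decoupled optimization genuinely has a minimizer in $\mathbb{R}^{K+1}$ when $p(\bm{x})$ or some $\eta_y(\bm{x})$ hits the boundary $\{0,1\}$; in those degenerate cases some $u_j$ must diverge, and making $\tilde{\psi}(\bm{g}^*(\bm{x})) = [\bm{\eta}(\bm{x});p(\bm{x})]$ rigorous likely requires a limit argument along a minimizing sequence $\bm{u}_n$ rather than an exact minimizer, which is the standard caveat for softmax-parameterized surrogates.
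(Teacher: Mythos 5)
Your proof is correct and follows essentially the same route as the paper's: recover $[\bm{\eta}(\bm{x});{\rm Pr}(M=Y|X=\bm{x})]$ from the strictly proper (multiclass cross-entropy plus binary log-loss) structure of the conditional risk, deduce consistency from the maxima-preserving property of Proposition \ref{p1}, and rule out unbounded estimators by observing that every $\bm{u}$ is a conditional-risk minimizer for the distribution with posterior $\tilde{\psi}(\bm{u})$. Your write-up is in fact more explicit than the paper's terse sketch --- the decoupling of the conditional risk into two separately minimized pieces, the identification $\mathcal{U}=\mathbb{R}^{K+1}$, and the caveat about minimizing sequences when $\eta_{y}(\bm{x})$ or ${\rm Pr}(M=Y|X=\bm{x})$ sits on the boundary are all details the paper leaves implicit.
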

The proof can be found in Appendix \ref{pt2}. According to the theorem above, we showed that our proposed asymmetric softmax function induces a consistent surrogate and a bounded probability estimator. We also showed that there does not exist an unbounded probability estimator for $L_{\tilde{\psi}}$, which guarantees that our proposed bounded estimator is never the modification of an unbounded one. This result enriches the toolbox for L2D with probability estimation and theoretically justifies the use of softmax parameterization in L2D. In fact, we can further substitute the cross-entropy-like counterpart with any strictly proper loss \cite{pc1} to get the same consistency result. In the following subsection, we will discuss the relationship between our method and the general consistent surrogate framework \cite{L2D4}.

\subsection{Connection with Consistent Surrogate Framework \cite{L2D4}: Asymmetry Can Help}

In the previous section, we showed that there exists an implementation of softmax $\tilde{\psi}$ that can induce both a consistent loss and a valid probability estimator for L2D. Given the theoretical successes of our proposed softmax-based method and the previous OvA strategy, we may further expect them to provide more insights into the design of surrogates and probability estimators for L2D. Recalling the consistent surrogate formulation (\ref{loss}) proposed in \citet{L2D4} that allows the use of all consistent multi-class losses for constructing L2D surrogates, it is an instinctive idea that our proposed consistent surrogates (\ref{asl}) and the previous work \cite{L2D5} can be included in this framework. However, this idea may not be easily confirmed: Theorem \ref{t1} implies that the two surrogates are not the trivial combinations of commonly used symmetric losses and formulation (\ref{loss}) since they can both induce bounded probability estimators. The following corollary shows that the two surrogates are included in the family of consistent surrogate formulation (\ref{loss}), but induced from two novel asymmetric multi-class surrogates.

\begin{corollary}\label{c1}\rm We can get the consistent surrogates $L_{\tilde{\psi}}$ (\ref{asl}) and $L_{\rm OvA}$ (\ref{OvA}) by setting $\phi$ in the consistent loss formulation (\ref{loss}) to specific \textbf{consistent and asymmetric} multi-class losses.
\end{corollary}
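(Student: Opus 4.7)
The plan is to exhibit, for each target surrogate, an asymmetric $(K+1)$-class multi-class loss $\phi$ such that substitution into (\ref{loss}) recovers the surrogate exactly, and then to check that $\phi$ is classification-calibrated so that Proposition~2 of \citet{L2D4} applies. The construction is a short algebraic regrouping driven by $[\![m\neq y]\!]=1-[\![m=y]\!]$.

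For $L_{\tilde{\psi}}$, applying the identity to (\ref{asl}) yields
$$L_{\tilde{\psi}}(\bm{u},y,m)=\underbrace{\left[-\log\tilde{\psi}_y(\bm{u})-\log(1-\tilde{\psi}_{K+1}(\bm{u}))\right]}_{\phi_{\tilde{\psi}}(\bm{u},y)}+[\![m=y]\!]\underbrace{\left[\log(1-\tilde{\psi}_{K+1}(\bm{u}))-\log\tilde{\psi}_{K+1}(\bm{u})\right]}_{\phi_{\tilde{\psi}}(\bm{u},K+1)},$$
which directly reads off $\phi_{\tilde{\psi}}(\bm{u},y)$ for $y\le K$ and $\phi_{\tilde{\psi}}(\bm{u},K+1)$. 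For $L_{\rm OvA}$ the $[\![m=y]\!]$ factor is already isolated in (\ref{OvA}), so I can set $\phi_{\rm OvA}(\bm{u},y)=\xi(u_y)+\sum_{y'\neq y}\xi(-u_{y'})$ (sum over $\{1,\ldots,K+1\}\setminus\{y\}$) for $y\le K$ and $\phi_{\rm OvA}(\bm{u},K+1)=\xi(u_{K+1})-\xi(-u_{K+1})$; substitution into (\ref{loss}) then trivially gives back $L_{\rm OvA}$. Asymmetry is visible by inspection: in $\phi_{\tilde{\psi}}$ the mapping $\tilde{\psi}$ itself treats $u_{K+1}$ specially (Definition~\ref{as}) and $\phi_{\tilde{\psi}}(\cdot,K+1)$ is a signed difference of logarithms rather than a single $-\log$, while in $\phi_{\rm OvA}$ the $(K+1)$-th class receives $\xi(u_{K+1})-\xi(-u_{K+1})$ instead of the symmetric OvA pair, so no coordinate permutation $P$ makes $\phi(P\bm{u})=P\phi(\bm{u})$ in either case.

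For classification-calibratedness, I would evaluate the conditional $\phi$-risk $C(\bm{u})=\sum_{y=1}^{K+1}\tilde{\eta}_y\phi(\bm{u},y)$ on the $(K+1)$-class posterior $\tilde{\eta}_y=\eta_y(\bm{x})/(1+p)$, $\tilde{\eta}_{K+1}=p/(1+p)$ induced by (\ref{rm2}), with $p={\rm Pr}(M=Y|\bm{x})$. For $\phi_{\tilde{\psi}}$ the minimization decouples because $(\tilde{\psi}_y)_{y=1}^K$ already lies in $\Delta^K$ and $\tilde{\psi}_{K+1}$ is controlled independently: a weighted cross-entropy on the first $K$ coordinates gives $\tilde{\psi}_y(\bm{u}^{*})=\tilde{\eta}_y/(1-\tilde{\eta}_{K+1})=\eta_y(\bm{x})$, and a one-dimensional binary cross-entropy on $\tilde{\psi}_{K+1}$ gives $\tilde{\psi}_{K+1}(\bm{u}^{*})=\tilde{\eta}_{K+1}/(1-\tilde{\eta}_{K+1})=p$. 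For $\phi_{\rm OvA}$, expanding the cross-terms rewrites $C(\bm{u})=\sum_{y=1}^{K+1}[a_y\xi(u_y)+b_y\xi(-u_y)]$ with $a_y+b_y=1-\tilde{\eta}_{K+1}$, and each one-dimensional proper-composite subproblem delivers $\psi_\xi(u^{*}_y)=\eta_y(\bm{x})$ and $\psi_\xi(u^{*}_{K+1})=p$. The maxima-preserving property in Proposition~\ref{p1} and the monotonicity of $\psi_\xi$ then yield $\mathop{\rm argmax}_y u^{*}_y=\mathop{\rm argmax}_y\tilde{\eta}_y$, which is exactly classification-calibratedness.

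The main obstacle will be justifying that these stationary points are true minimizers: the coefficient $1-2\tilde{\eta}_{K+1}$ appearing in the $\tilde{\psi}_{K+1}$ (resp.\ $u_{K+1}$) subproblem becomes negative when $\tilde{\eta}_{K+1}>1/2$, and $C(\bm{u})$ is then unbounded below, so no interior minimizer exists. In the L2D setting, however, $\tilde{\eta}_{K+1}=p/(1+p)\le 1/2$ because $p\in[0,1]$, so the interior solution is always the genuine minimizer on the relevant region of $\Delta^{K+1}$, and the consistency conclusion of \citet{L2D4} applies.
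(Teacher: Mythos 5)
Your proposal is correct and follows essentially the same route as the paper's proof in Appendix~\ref{pc1}: the identical regrouping via $[\![m\neq y]\!]=1-[\![m=y]\!]$ produces exactly the paper's $\phi_{\tilde{\psi}}$ and $\phi_{\rm OvA}$, and calibration is verified by minimizing the conditional risk coordinate-wise and then invoking maxima-preservation of $\tilde{\psi}$ (resp.\ monotonicity of the inverse link of $\xi$). Your closing observation about the coefficient $1-2\tilde{\eta}_{K+1}$ is a worthwhile refinement of the paper's terse ``easy to verify'' step, which indeed implicitly restricts attention to posteriors with $\tilde{\eta}_{K+1}\le 1/2$ --- precisely the region $\tilde{\Delta}^{K+1}$ induced by the L2D reduction.
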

Due to page limitations, we present the formulation of the asymmetric multi-class losses and the proof in the Appendix \ref{pc1}. Although this conclusion reveals the connection between the two consistent surrogates and the general framework (\ref{loss}), it is important to note that it does not necessarily imply that the results in \citet{L2D5} and this paper are trivial. Since asymmetric losses are seldom used in ordinary multi-class classification, it is hard to obtain $L_{\tilde{\phi}}$ and $L_{\rm OvA}$ by simply selecting $\phi$ from known consistent multi-class losses. According to this corollary and theorem \ref{t1}, we can determine that a multi-class loss $\phi$ with asymmetric structure is a necessity if we aim to design consistent L2D surrogates with bounded probability estimators based on the general formulation (\ref{loss}). Based on this insight, it is promising to discover more surrogates with bounded probability estimators for L2D by focusing on the design of asymmetric multi-class surrogates.

\subsection{Regret Transfer Bounds}

In this section, we further study the regret transfer bounds of our proposed surrogate (\ref{asl}) to characterize the effect of minimizing surrogate risk $R_{L_{\tilde{\psi}}}(\bm{g})$ on both the target risk $R_{01}^{\bot}(\bm{g})$ and the misclassification error of the classifier. Denoted by $\bm{g}_{\rm class}=\bm{g}_{1:K}$ the classifier counterpart of our model and $R_{01}(\bm{g})$ is its misclassification error, we can show that:
\begin{theorem}\label{th3}\rm  
$\textstyle\max\left(R_{01}(\bm{g}_{\rm class})-R_{01}^{*},~R_{01}^{\bot}(\bm{g})-R_{01}^{\bot*}\right)\leq\sqrt{2\left(R_{L_{\tilde{\psi}}}(\bm{g})-R_{L_{\tilde{\psi}}}^{*}\right)}.$
\end{theorem}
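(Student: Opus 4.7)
The plan is to prove the inequality pointwise on the conditional Bayes regrets and then integrate via Jensen's inequality. At a fixed $\bm{x}$, write $\eta_y=\Pr(Y{=}y\mid\bm{x})$ and $p=\Pr(M{=}Y\mid\bm{x})$. Since $\tilde{\psi}_{1:K}$ coincides with the ordinary softmax over $g_{1:K}$ while $\tilde{\psi}_{K+1}$ takes values in $[0,1]$, the conditional surrogate risk of $L_{\tilde{\psi}}$ splits cleanly into a categorical cross-entropy in $(\bm{\eta},\tilde{\psi}_{1:K}(\bm{g}))$ plus a binary cross-entropy in $(p,\tilde{\psi}_{K+1}(\bm{g}))$. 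Subtracting its minimum, which is attained at $\tilde{\psi}_{1:K}=\bm{\eta}$ and $\tilde{\psi}_{K+1}=p$ by Theorem~\ref{main}, gives the pointwise excess surrogate risk $\Delta L(\bm{x})=\mathrm{KL}(\bm{\eta}\|\tilde{\psi}_{1:K}(\bm{g}(\bm{x})))+\mathrm{KL}_{\mathrm{Bern}}(p\|\tilde{\psi}_{K+1}(\bm{g}(\bm{x})))$. Applying Pinsker's inequality to each term yields
\[
\Delta L(\bm{x}) \ge \tfrac{1}{2}\,a^2 + 2\,b^2,\qquad a:=\|\bm{\eta}-\tilde{\psi}_{1:K}(\bm{g}(\bm{x}))\|_1,\quad b:=|p-\tilde{\psi}_{K+1}(\bm{g}(\bm{x}))|.
\]

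Next I would translate these distances into bounds on the pointwise 0-1 regrets. By Proposition~\ref{p1}, $\varphi\circ\bm{g}(\bm{x})$ picks the coordinate $\hat{\imath}=\arg\max_i\tilde{\psi}_i(\bm{g}(\bm{x}))$ of the augmented vector $v=(\eta_1,\dots,\eta_K,p)$, whereas the Bayes optimum picks $i^*=\arg\max_i v_i$. The telescoping identity $v_{i^*}-v_{\hat{\imath}}=(v_{i^*}-\hat{v}_{i^*})+(\hat{v}_{i^*}-\hat{v}_{\hat{\imath}})+(\hat{v}_{\hat{\imath}}-v_{\hat{\imath}})$, together with $\hat{v}_{i^*}\le\hat{v}_{\hat{\imath}}$, bounds the pointwise deferral regret by $|v_{i^*}-\hat{v}_{i^*}|+|v_{\hat{\imath}}-\hat{v}_{\hat{\imath}}|$. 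A short case split then gives: (i) if both $i^*,\hat{\imath}\le K$, the standard probability-vector argument yields $r_{\mathrm{L2D}}\le a$; (ii) if $i^*=\hat{\imath}=K{+}1$, then $r_{\mathrm{L2D}}=0$; (iii) in the two mixed cases where exactly one of $i^*,\hat{\imath}$ equals $K{+}1$, one gets $r_{\mathrm{L2D}}\le a/2+b$, where the $a/2$ comes from the simplex constraint $\sum_{y\le K}(\eta_y-\tilde{\psi}_y)=0$, which implies $|\eta_y-\tilde{\psi}_y|\le a/2$ for every $y$. The same telescoping restricted to the first $K$ coordinates yields the classification bound $r_{\mathrm{class}}\le a$.

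Squaring and using $(x+y)^2\le 2(x^2+y^2)$, in every case one obtains $r^2\le\max(a^2,\,a^2/2+2b^2)\le a^2+4b^2\le 2\Delta L(\bm{x})$. Since $R_{01}^{\bot}(\bm{g})-R_{01}^{\bot*}=\mathbb{E}_{\bm{x}}[r_{\mathrm{L2D}}(\bm{x})]$ and $R_{01}(\bm{g}_{\mathrm{class}})-R_{01}^*=\mathbb{E}_{\bm{x}}[r_{\mathrm{class}}(\bm{x})]$, Jensen's inequality applied to $\sqrt{\cdot}$ gives, for either choice of $r$,
\[
\mathbb{E}_{\bm{x}}[r(\bm{x})]\le\mathbb{E}_{\bm{x}}\!\bigl[\sqrt{2\Delta L(\bm{x})}\bigr]\le\sqrt{2\,\mathbb{E}_{\bm{x}}[\Delta L(\bm{x})]}=\sqrt{2(R_{L_{\tilde{\psi}}}(\bm{g})-R_{L_{\tilde{\psi}}}^*)},
\]
and taking the max on the left-hand side delivers the claim.

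The main obstacle will be the pointwise bound in the two mixed cases of $r_{\mathrm{L2D}}$. A naive bound $r_{\mathrm{L2D}}\le a+b$ leads to $r^2\le 2(a^2+b^2)$, which is \emph{not} dominated by $2\Delta L\ge a^2+4b^2$ (the excess $a^2$ factor cannot be absorbed when $a\gg b$). Obtaining the tighter $r_{\mathrm{L2D}}\le a/2+b$ requires simultaneously using the maxima-preservation of $\tilde{\psi}$ (to discard the non-positive middle term of the telescope) and the fact that $\bm{\eta}$ and $\tilde{\psi}_{1:K}$ both lie in $\Delta^K$ (to upgrade each single-coordinate discrepancy from $a$ down to $a/2$); it is precisely this combination that matches the constant $\sqrt{2}$ on the right-hand side of the theorem.
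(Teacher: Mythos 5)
Your proposal is correct and follows essentially the same route as the paper: decompose the pointwise excess surrogate risk into a categorical KL plus a Bernoulli KL, apply Pinsker to get $\Delta L(\bm{x})\ge \tfrac12 a^2+2b^2$, convert $a,b$ into pointwise $0$-$1$ regret bounds via maxima-preservation, and finish with Jensen. The one substantive difference is in your favor: the paper, after relaxing the binary Pinsker constant from $2b^2$ to $\tfrac12 b^2$, asserts the single inequality $\tfrac12(a^2+b^2)\ge\tfrac12\bigl(\tilde{\psi}_{y'}-\eta_{y'}-\tilde{\psi}_{y''}+\eta_{y''}\bigr)^2$ without a case split, and that inequality can actually fail in the mixed case (e.g.\ when $b>\tfrac34 a$ and the two deviations have opposite signs), even though the final bound survives. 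Your version --- keeping the full $2b^2$, using the simplex constraint to get $|\eta_y-\tilde{\psi}_y|\le a/2$, and checking $r^2\le\max\bigl(a^2,\,a^2/2+2b^2\bigr)\le 2\Delta L$ case by case --- is the airtight way to land exactly on the constant $\sqrt{2}$, so it is a sharper write-up of the same argument rather than a different proof.
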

The proof can be found in Appendix \ref{pt3}. This theorem shows that we can use the excess risk of our proposed surrogate to upper-bound those of our concerned targets, and its proof is conducted by applying Pinsker's inequality and zooming. 

This theorem shows that as long as we reach a low risk \textit{w.r.t.} the proposed surrogate (approximately optimal), we can obtain a model with satisfying performances for both L2D and classification. This conclusion is not trivial: according to the characterization of $R_{01}^{\bot}$'s Bayes optimality in Definition \ref{bayes}, the classifier counterpart is not guaranteed to classify accurately on the deferred samples. We will further experimentally demonstrate the efficacy of our surrogate for L2D and classification in the next section.
\section{Experiments}
\label{s5}
\begin{figure*}[!t]
\centering
\subfigure[S-SM]{
\includegraphics[width=0.235\textwidth]{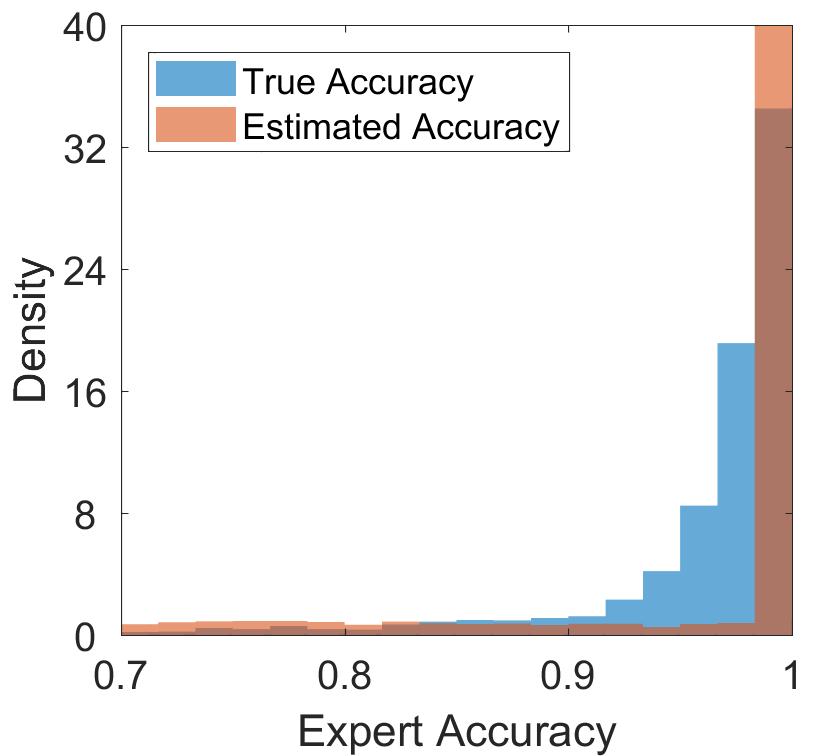}}
\subfigure[S-OvA]{
\includegraphics[width=0.235\textwidth]{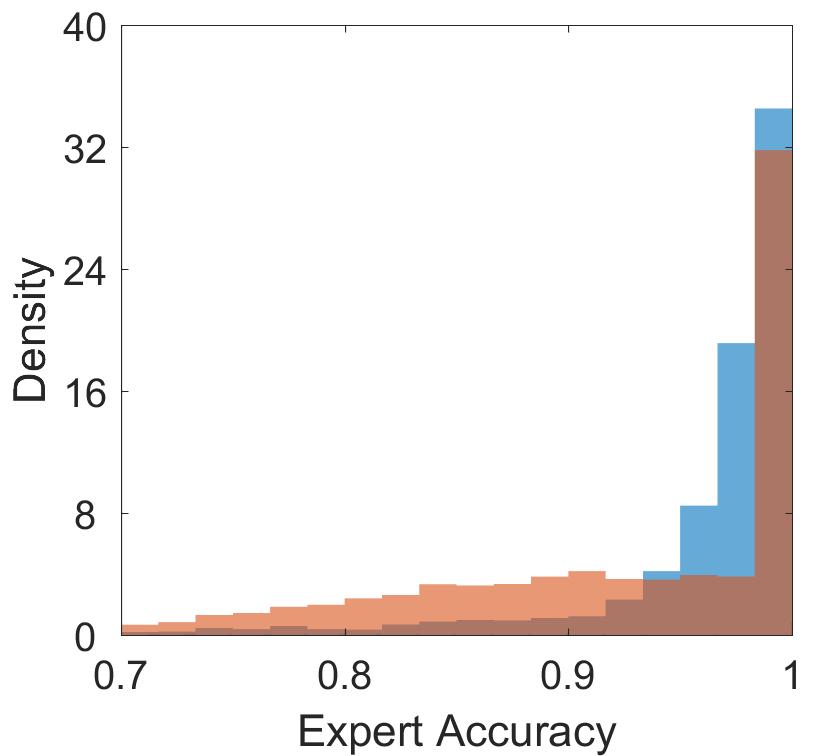}}
\subfigure[A-OvA]{
\includegraphics[width=0.235\textwidth]{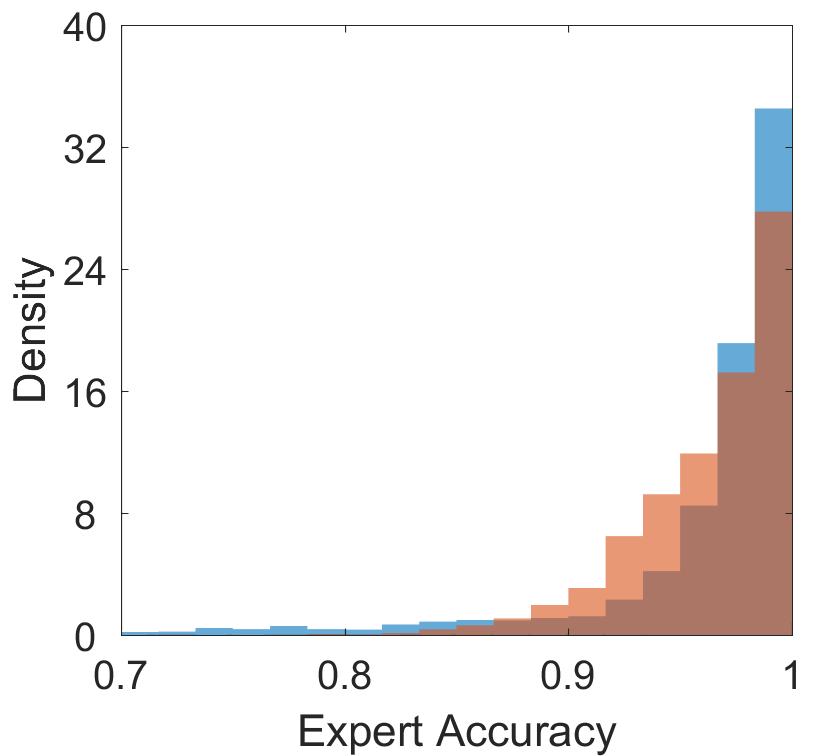}}
\subfigure[A-SM (Proposed)]{
\includegraphics[width=0.235\textwidth]{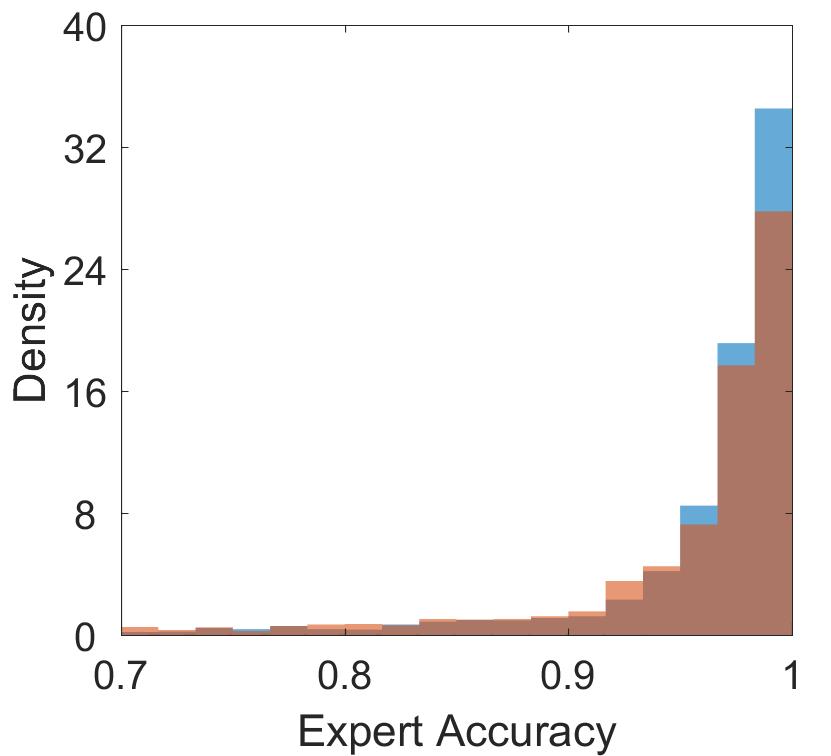}}

\caption{Distributions of the true and estimated accuracy of baselines and proposed method on CIFAR10H.}
\vspace{-15pt}
\label{F3}
\end{figure*}

\begin{table}[!t]
    \centering
    \footnotesize
    \caption{Experimental results on CIFAR100 with for 5 trials. We report the mean(\%){\tiny{(standard error(\%))}} of related statistics.The best and comparable methods based on t-test at significance level $5\%$ are highlighted in boldface.}
 
\resizebox{1.0\textwidth}{!}
{\tiny
    \begin{tabular}{c|c|ccc|ccc}
         \toprule
         \multirow{2}{*}{Method}&\multirow{2}{*}{Expert}&\multirow{2}{*}{Error}&\multirow{2}{*}{Coverage}&\multirow{2}{*}{ECE}&\multicolumn{3}{c}{Budgeted Error }\\\cmidrule(lr){6-8}&&&&&10\%&20\%&30\%
         \\ 
         \midrule
        \multicolumn{8}{c}{Expert Accuracy = 94\%}
        
            \\
         \midrule
         
         \multirow{3}{*}{S-SM} &20&\textbf{24.58\tiny{(0.13)}}&77.72\tiny{(0.31}&4.86\tiny{(0.11)}&31.68\tiny{(0.39)}&25.00\tiny{(0.24)}&24.59\tiny{(0.13)}\\
         
         &40&\textbf{21.92\tiny{(0.32)}}&57.89\tiny{(0.54)}&9.22\tiny{(0.38)}&46.69\tiny{(0.63)}&38.50\tiny{(0.60)}&29.94\tiny{(0.62)}\\
         
         &60&\textbf{18.69\tiny{(0.88)}}&40.34\tiny{(6.28)}&11.10\tiny{(0.72)}&59.44\tiny{(4.95)}&51.36\tiny{(5.08)}&42.77\tiny{(5.09)}\\
         
         \midrule
         \multirow{3}{*}{S-OvA} &20&27.00\tiny{(1.91)}&85.40\tiny{(0.91)}&5.15\tiny{(0.31)}&28.42\tiny{(2.26)}&27.00\tiny{(1.91)}&27.00\tiny{(1.91)}\\
         
         &40&27.33\tiny{(1.50)}&68.47\tiny{(1.59)}&9.38\tiny{(0.46)}&39.95\tiny{(2.35)}&32.93\tiny{(2.30)}&27.92\tiny{(2.06)}\\
         
         &60&\textbf{18.44\tiny{(0.64)}}&58.18\tiny{(1.55)}&7.70\tiny{(0.24)}&42.52\tiny{(1.74)}&33.67\tiny{(1.72)}&25.74\tiny{(1.75)}\\

         \midrule        
         
        \multirow{3}{*}{A-OvA} &20&25.63\tiny{(0.97)}&90.40\tiny{(0.89)}&\textbf{4.35\tiny{(0.29)}}&25.65\tiny{(0.99)}&25.63\tiny{(0.97)}&25.63\tiny{(0.97)}\\
         
         &40&23.23\tiny{(0.42)}&80.46\tiny{(0.51)}&\textbf{6.81\tiny{(0.33)}}&27.44\tiny{(0.62)}&23.23\tiny{(0.42)}&23.23\tiny{(0.42)}\\
         
         &60&\textbf{19.64\tiny{(1.10)}}&70.44\tiny{(2.73)}&7.34\tiny{(0.38)}&31.90\tiny{(2.73)}&24.75\tiny{(2.73)}&20.05\tiny{(1.62)}\\

        \midrule
        \multirow{3}{*}{\makecell{A-SM\\(Proposed)}} &20&\textbf{24.54\tiny{(0.06)}}&\textbf{98.16\tiny{(0.03)}}&\textbf{4.63\tiny{(0.11)}}&\textbf{24.54\tiny{(0.06)}}&\textbf{24.54\tiny{(0.06)}}&\textbf{24.54\tiny{(0.06)}}\\
         
         &40&\textbf{22.17\tiny{(0.36)}}&\textbf{92.20\tiny{(0.32)}}&\textbf{6.58\tiny{(0.22)}}&\textbf{22.17\tiny{(0.36)}}&\textbf{22.17\tiny{(0.36)}}&\textbf{22.17\tiny{(0.36)}}\\
         
         &60&\textbf{19.30\tiny{(0.58)}}&\textbf{84.72\tiny{(0.30)}}&\textbf{5.96\tiny{(0.44)}}&\textbf{22.91\tiny{(0.42)}}&\textbf{19.30\tiny{(0.58)}}&\textbf{19.30\tiny{(0.58)}}\\
         
         \midrule
        \multicolumn{8}{c}{Expert Accuracy = 75\%}
        
            \\
         \midrule
         
         \multirow{3}{*}{S-SM} &20&25.49\tiny{(0.20)}&86.46\tiny{(0.52)}&5.07\tiny{(0.21)}&26.09\tiny{(0.23)}&25.48\tiny{(0.20)}&25.49\tiny{(0.20)}\\
         
         &40&25.13\tiny{(0.44)}&74.65\tiny{(0.48)}&12.82\tiny{(0.29)}&32.82\tiny{(0.66)}&26.99\tiny{(0.59)}&25.13\tiny{(0.44)}\\
         
         &60&24.05\tiny{(0.43)}&60.25\tiny{(1.21)}&18.70\tiny{(0.55)}&42.30\tiny{(0.98)}&36.18\tiny{(0.99)}&29.47\tiny{(1.10)}\\

         \midrule
         \multirow{3}{*}{S-OvA} &20&26.53\tiny{(0.86)}&90.06\tiny{(2.92)}&5.57\tiny{(0.54)}&26.83\tiny{(1.20)}&26.53\tiny{(0.86)}&26.53\tiny{(0.86)}\\
         
         &40&26.89\tiny{(1.78)}&77.69\tiny{(5.82)}&7.14\tiny{(1.41)}&32.30\tiny{(4.92)}&28.33\tiny{(3.11)}&26.90\tiny{(1.78)}\\
         
         &60&25.29\tiny{(0.69)}&69.08\tiny{(1.35)}&7.47\tiny{(0.12)}&36.67\tiny{(1.45)}&30.44\tiny{(1.33)}&25.70\tiny{(1.15)}\\

         \midrule        
         
        \multirow{3}{*}{A-OvA} &20&25.94\tiny{(0.52)}&93.05\tiny{(0.43)}&\textbf{4.78\tiny{(0.29)}}&{25.94\tiny{(0.52)}}&{25.94\tiny{(0.52)}}&{25.94\tiny{(0.52)}}\\
         
         &40&25.73\tiny{(1.07)}&83.29\tiny{(2.19)}&\textbf{5.75\tiny{(0.80)}}&27.75\tiny{(1.39)}&25.73\tiny{(1.07)}&25.73\tiny{(1.07)}\\
         
         &60&\textbf{22.81\tiny{(0.52)}}&79.76\tiny{(1.96)}&6.10\tiny{(0.32)}&27.19\tiny{(1.64)}&23.08\tiny{(0.87)}&\textbf{22.81\tiny{(0.52)}}\\

        \midrule
        \multirow{3}{*}{\makecell{A-SM\\(Proposed)}} &20&\textbf{24.58\tiny{(0.12)}}&\textbf{98.90\tiny{(0.75)}}&\textbf{4.34\tiny{(0.31)}}&\textbf{24.58\tiny{(0.12)}}&\textbf{24.58\tiny{(0.12)}}&\textbf{24.58\tiny{(0.12)}}\\
         
         &40&\textbf{24.29\tiny{(0.35)}}&\textbf{96.53\tiny{(0.21)}}&\textbf{5.58\tiny{(0.13)}}&\textbf{24.29\tiny{(0.35)}}&\textbf{24.29\tiny{(0.35)}}&\textbf{24.29\tiny{(0.35)}}\\
         
         &60&\textbf{22.48\tiny{(0.32)}}&\textbf{92.63\tiny{(0.33)}}&\textbf{5.58\tiny{(0.17)}}&\textbf{22.48\tiny{(0.32)}}&\textbf{22.48\tiny{(0.32)}}&\textbf{22.48\tiny{(0.32)}}\\

         \bottomrule
        \end{tabular}

        \label{table_exp}
}

\vspace{-15pt}
\end{table}

In this section, we compare our proposed asymmetric softmax estimator and its induced surrogate loss with the estimators and losses in previous works. Detailed setup can be found in Appendix \ref{aexp}.

\textbf{Datasets and Models.} We evaluate the proposed method and baselines on widely used benchmarks with both synthetic and real-world experts. For synthetic experts, we conduct experiments on the CIFAR100 \cite{Cifar}. Following the previous works, the expert has a chance of $p$ to generate correct labels on the first $k\in\{20,40,60\}$ classes and at random otherwise. We set $p=94\%$ as in \citet{L2D2} and $75\%$ as in \citet{L2D5} to simulate experts of high and medium accuracy. For real-world experts, we use CIFAR10H, HateSpeech, and ImageNet-16H \cite{CifarH, Hate,Image} datasets, where the expert's prediction is generated using the provided auxiliary expert information. More experimental results of real-world experts can be found in Appendix \ref{aexp}.

For CIFAR-100, HateSpeech, and ImageNet-16H, we report the misclassification error, coverage, and the ECE of the expert accuracy estimates. We also report the error of our model with the deferral budget as in Figure 4 of \citet{L2D5} to further evaluate the performance of the obtained classifier. For CIFAR10H, we plot the distribution of the true and estimated expert accuracy in Figure \ref{F3} to illustrate the performance of baselines and our method on fitting ${\rm Pr}(M=Y|X=\bm{x})$. The experiments on CIFAR-100 is conducted with 28-layer WideResNet \cite{Wide} and SGD as in previous works \cite{L2D2,L2D5}, and those on datasets with real-world experts are conducted using pre-trained models/embedding.

\textbf{Baselines.} We compare our \textbf{A}symmetric \textbf{S}oft\textbf{M}ax based method (A-SM) with the previously proposed \textbf{S}ymmetric \textbf{S}oft\textbf{M}ax-based (S-SM) method \cite{L2D2} and \textbf{A}symmetric OvA (A-OvA) based method \cite{L2D5}. We also combined the \textbf{S}ymmetric OvA logistic loss (S-OvA) with (\ref{loss}) to further evaluate the effect of symmetric loss and its induced unbounded estimator. For unbounded probability estimators in S-SM and S-OvA, we clip their estimates into $[0,1]$ to make them valid. We directly use the output of baselines and our method without post-hoc techniques \cite{OC, L2D7} to better reflect their own performance. 

\textbf{Experimental Results.} 
By observing Figure \ref{F3}, we can find that the distributions of bounded method A-OvA and our proposed A-SM have markedly more overlap with the true distribution compared with the symmetric and unbounded ones, which directly shows that the bounded estimators can better estimate of expert accuracy. As can be seen from the experimental results reported in Table \ref{table_exp} and \ref{hate}, our proposed A-SM is always better than or comparable to all the baselines \textit{w.r.t.} classification accuracy. We can also see that the coverage of our method is always significantly higher than the baselines, which shows that our surrogate can induce ideal models for L2D. Though S-SM is comparable to A-SM with a high-accuracy expert, it has a significantly lower coverage and is outperformed by A-SM with an expert of lower accuracy, which indicates that it is suffering from the problem of deferring more samples than necessary. This problem is also observed in learning with rejection \cite{charoenphakdee}. Though S-OvA can mitigate this problem, its coverage is still consistently lower than A-SM. Notice that when there exist deferral budget requirements, all the unbounded baselines are outperformed by A-OvA, and A-OvA is further outperformed by A-SM, which indicates that our method can also efficiently generate a classifier. Meanwhile, the ECE of our estimated expert accuracy is also comparable to or better than A-OvA, while those of the unbounded ones all have significantly higher ECE, which further shows the efficacy of our method in estimating expert accuracy.

\begin{table}[!t]

    \centering
    \footnotesize
    \caption{Experimental results on HateSpeech and ImageNet-16H with noise type "095" for 5 trials. We report the mean(\%){\tiny{(standard error(\%))}} of related statistics.The best and comparable methods based on t-test at significance level $5\%$ are highlighted in boldface.}
\label{hate} 
\resizebox{1.0\textwidth}{!}
{\tiny
    \begin{tabular}{c|ccc|ccc}
         \toprule
         \multirow{2}{*}{Method}&\multirow{2}{*}{Error}&\multirow{2}{*}{Coverage}&\multirow{2}{*}{ECE}&\multicolumn{3}{c}{Budgeted Error}\\\cmidrule(lr){5-7}&&&&10\%&20\%&30\%

            \\
         \midrule
         \multicolumn{7}{c}{HateSpeech}
            \\
        \midrule
         {S-SM} &{8.65\tiny{(0.14)}}&70.19\tiny{(1.50)}&3.95\tiny{(0.48)}&25.48\tiny{(0.49)}&16.80\tiny{(0.44)}&8.97\tiny{(0.33)}\\
         \midrule
         {S-OvA} &8.65\tiny{(0.14)}&69.90\tiny{(0.53)}&1.77\tiny{(0.07)}&24.19\tiny{(0.25)}&15.89\tiny{(0.34)}&8.78\tiny{(0.31)}\\

         \midrule        
         
        {A-OvA} &9.64\tiny{(0.32)}&77.20\tiny{(0.42)}&\textbf{1.73\tiny{(0.26)}}&16.525\tiny{(0.28)}&8.80\tiny{(0.25)}&8.56\tiny{(0.29)}\\
         
        \midrule
        {A-SM} &\textbf{8.06\tiny{(0.40)}}&\textbf{81.98\tiny{(0.58)}}&\textbf{1.53\tiny{(0.26)}}&\textbf{15.07\tiny{(0.35)}}&\textbf{8.06\tiny{(0.40)}}&\textbf{8.06\tiny{(0.40)}}\\
        \midrule
         \multicolumn{7}{c}{ImageNet-16H}
            \\
        \midrule
        
         {S-SM} &{15.08\tiny{(1.19)}}&26.41\tiny{(1.78)}&22.92\tiny{(2.59)}&68.58\tiny{(1.30)}&60.33\tiny{(1.19)}&51.27\tiny{(1.94)}\\
         \midrule
         {S-OvA} &15.15\tiny{(1.50)}&23.08\tiny{(2.39)}&11.68\tiny{(0.53)}&68.91\tiny{(2.51)}&60.08\tiny{(1.82)}&51.50\tiny{(2.05)}\\

         \midrule        
         
        {A-OvA} &14.17\tiny{(0.58)}&21.56\tiny{(2.13)}&{11.57\tiny{(1.50)}}&71.14\tiny{(2.10)}&62.39\tiny{(2.10)}&53.95\tiny{(2.27)}\\
         
        \midrule
        {A-SM} &\textbf{12.59\tiny{(0.79)}}&\textbf{39.48\tiny{(2.50)}}&\textbf{8.13\tiny{(1.33)}}&\textbf{57.75\tiny{(1.37)}}&\textbf{49.66\tiny{(1.18)}}&\textbf{40.85\tiny{(1.51)}}\\
        
        \bottomrule
        
        \end{tabular}

}

\end{table}
\section{Conclusion}
In this paper, we provide a novel consistent surrogate loss based on an asymmetric softmax function for learning to defer that can also provide calibrated probability estimates for the classifier and for expert correctness. We reveal that the root cause of the previous unbounded and miscalibrated probability estimators for L2D is not softmax but the intrinsic symmetry of the used loss function. We solve this problem by designing an asymmetric softmax function and using it to induce a consistent surrogate loss and a bounded probability estimator. We further give the regret transfer bounds of our method for both L2D and classification tasks. Finally, we evaluate our method and the baseline surrogate losses and probability estimators on benchmark datasets with both synthetic and real-world experts and show that we outperform prior methods. While we provide a consistent multi-expert extension of our proposed surrogate in Appendix \ref{extend}, it is still a promising future direction to generalize the multi-expert surrogates \cite{L2D8} to all the consistent multiclass losses as in \citet{L2D4}.

\section*{Acknowledgement}
This research/project is supported by the National Research Foundation, Singapore and DSO National Laboratories under the AI Singapore Programme (AISG Award No: AISG2-GC-2023-009), and Ministry of Education, Singapore, under its Academic Research Fund Tier 1 (RG13/22). Lei Feng is supported by Chongqing Overseas Chinese Entrepreneurship and Innovation Support Program, CAAI-Huawei MindSpore Open Fund, and Openl Community (https://openi.pcl.ac.cn). Hussein Mozannar is thankful for the support of the MIT-IBM Watson AI Lab.
\newpage
\bibliographystyle{plainnat}
\bibliography{l2d.bib}
\newpage
\appendix
\section{Proof of Theorem \ref{t1}}
\label{pt1}
We first prove that if there is an L2D probability estimator $\hat{\bm{\eta}}$ for $L_{\phi}$, then we can reconstruct a probability estimator $\tilde{\bm{\eta}}$ for multiclass classification with $\phi$ if $\bm{\eta}$ is in $\tilde{\Delta}^{K+1}$, which is the collection of all the elements $\bm{\beta}\in{\Delta}^{K+1}$ with $\beta_{K+1}
\leq\frac{1}{2}$. Then we show that according to the symmetry of $\phi$, we can extend this estimator to $\Delta^{K+1}$, and then construct an unbounded L2D probability estimator for $L_{\phi}$.
\begin{proof}
Denote by $R_{L_{\phi}}(\bm{u},\bm{\eta}(\bm{x}),{\rm Pr}(M=Y|X=\bm{x}))=\sum_{y=1}^{K}{\eta}(\bm{x})_{y}\phi(\bm{u},y)+{\rm Pr}(M=Y|X=\bm{x})\phi(\bm{u},K+1)$ the conditional risk for L2D. Denote by $\hat{\bm{\eta}}$ a probability estimator for $L_{\phi}$ that $\hat{\bm{\eta}}(\bm{u}^{*})=[\bm{\eta}(\bm{x});{\rm Pr}(M=Y|X=\bm{x})]$ for all $\bm{u}^{*}\in\mathop{\rm argmin}_{\bm{u}}R_{L_{\phi}}(\bm{u},\bm{\eta}(\bm{x}),{\rm Pr}(M=Y|X=\bm{x}))$, we can learn that $\Delta^{K}\times[0,1]$ is in the range of $\hat{\bm{\eta}}$. Then it is easy to verify that $\tilde{\bm{\eta}}(\bm{u})=\frac{\hat{\bm{\eta}}(\bm{u})}{1+\hat{\eta}_{K+1}(\bm{u})}\in\tilde{\Delta}^{K+1}$ is a valid probability estimator for $K+1$-class multiclass classification with $\phi$ if the posterior probabilities $[p(y|\bm{x})]_{y=1}^{K+1}$ is in $\tilde{\Delta}^{K+1}$. 

Then we construct a valid estimator for $\Delta^{K+1}$ based on $\tilde{\bm{\eta}}$. Denote by $P$ a permutation matrix that exchanges the value of a vector's first and last dimension, then we have the following estimator $\tilde{\bm{\eta}}'$:
$$\tilde{\bm{\eta}}'(\bm{u})=
\begin{cases}
\tilde{\bm{\eta}}(\bm{u}), ~u_{K+1}\not=\max_{y}{u}_{y},\\
P\tilde{\bm{\eta}}(P\bm{u}),~{\rm else}.
\end{cases}
$$
We then prove that it is a valid estimator. Denote by ${\Delta}^{K+1}_{+}$ the set of all the $\bm{\beta}\in\Delta^{K+1}$ that $\beta_{K+1}\not=\max_{y}{\beta}_{y}$, we can learn that ${\Delta}^{K+1}_{+}\in\tilde{\Delta}^{K+1}$. Then for any $\bm{\beta}\in{\Delta}^{K+1}_{+}$, denote by $\bm{u}^{*}$ the minimizer of conditional risk \textit{w.r.t.} $\bm{\beta}$ and $\phi$, we can learn that $u^{*}_{K+1}\not=\max_{y}{u}^{*}_{y}$ due to the \textbf{consistency} of $\phi$ and thus $\tilde{\bm{\eta}}'(\bm{u}^{*})=\tilde{\bm{\eta}}(\bm{u}^{*})=\bm{\beta}$. 

For any $\bm{\beta}\not\in{\Delta}^{K+1}_{+}$, we can learn that $P\bm{\beta}\in{\Delta}^{K+1}_{+}$. Denote by $\bm{u}^{*}$ the minimizer of conditional risk \textit{w.r.t.} $\bm{\beta}$ and $\phi$, then we can learn that $P\bm{u}^{*}$ must be the minimizer of $P\bm{\beta}$ according to the \textbf{symmetry} of $\phi$. Then we have that $\tilde{\bm{\eta}}(P\bm{u}^{*})=P\bm{\beta}$. Furthermore, notice that $PP\bm{\beta}=\bm{\beta}$, then we have $\tilde{\bm{\eta}}'(\bm{u}^{*})=P\tilde{\bm{\eta}}(P\bm{u}^{*})=PP\bm{\beta}=\bm{\beta}$. 

Combining the two paragraphs above, we can learn that $\tilde{\bm{\eta}}'$ is a valid multiclass probability estimator for $\phi$. Then we can construct an unbounded L2D estimator as in (\ref{ce}), which indicates the existence of an unbounded probability estimator. Furthermore, since the loss function is unchanged, the collection of all the minimizers of $R_{L_{\phi}}(\bm{u},\bm{\eta}(\bm{x}),{\rm Pr}(M=Y|X=\bm{x}))$ are also unchanged, and thus the all the values should have the same value on $\mathcal{U}$.
\end{proof}

\section{Proof of Proposition \ref{p1}}
\label{pp1}
\begin{proof}
The boundedness of the first $K$ dimensions is straightforward. The $K+1$th dimension's boundedness can also be directly proved by reformulating it as
$$\textstyle\frac{\exp(u_{K+1})}{\exp(u_{K+1 })+\sum_{y'=1}^{K}\exp(u_{y'})-\max_{y'\in\{1,\cdots,K\}}\exp(u_{y'})}.$$
Then we begin to prove its maxima-preserving. Denote by $y_{1}=\mathop{\rm argmax}_{y\in\{1,\cdots,K\}}u_{y}$. It is easy to verify that $\tilde{\psi}_{y_{1}}(\bm{u})>\tilde{\psi}_{y}(\bm{u})$ for any $y\in\{1,\cdots,K\}/\{y_{1}\}$ due to the property of softmax function. Then we focus on the relation between $\tilde{\psi}_{y_{1}}(\bm{u})$ and $\tilde{\psi}_{K+1}(\bm{u})$. 
We can learn that:
$$\textstyle\frac{\exp(u_{K+1})}{\exp(u_{K+1 })+\sum_{y'=1}^{K}\exp(u_{y'})-\exp(u_{y_{1}})}=\textstyle\frac{1}{1+\frac{\sum_{y'=1}^{K}\exp(u_{y'})-\exp(u_{y_{1}})}{\exp(u_{K+1})}},$$
$$\textstyle\frac{\exp(u_{y_{1}})}{\sum_{y'=1}^{K}\exp(u_{y'})}=\textstyle\frac{1}{1+\frac{\sum_{y'=1}^{K}\exp(u_{y'})-\exp(u_{y_{1}})}{\exp(u_{y_{1}})}}.$$
When $u_{K+1}>u_{y_{1}}$, 
we can learn that $\frac{\sum_{y'=1}^{K}\exp(u_{y'})-\exp(u_{y_{1}})}{\exp(u_{K+1})}<\frac{\sum_{y'=1}^{K}\exp(u_{y'})-\exp(u_{y_{1}})}{\exp(u_{y_{1})}}$, then $\tilde{\psi}_{K+1}(\bm{u})>\tilde{\psi}_{y_{1}}(\bm{u})$. Based on the formulations above, it is also easy to verify the cases when $u_{y_{1}}\geq u_{K+1}$. Combining the discussions above and we can conclude the proof.
\end{proof}

\section{Proof of Theorem \ref{main}}
\label{pt2}
\begin{proof}
The consistency can be directly obtained by recovering the class probability and expert accuracy according to the maxima-preserving, then we first focus on how our proposed estimator recovers the posterior probabilities.

According to the property of log loss, it is easy to learn that $\tilde{\psi}_{y}(\bm{g}^{*}(\bm{x}))=p(y|\bm{x})$ for $y\in\{1,\cdots, K\}$. When $y=K+1$, we can learn from the range of our estimator and the property of binary log loss that $\tilde{\psi}_{y}(\bm{g}^{*}(\bm{x}))={\rm Pr}(M=Y|X=\bm{x})$. Then we can conclude that $\tilde{\psi}(\bm{g}^{*}(\bm{x}))=\bm{\eta}(\bm{x})\times{\rm Pr}(M=Y|X=\bm{x})$.

Then we begin to prove that there is no unbounded probability estimator by contradiction. Suppose there exists an unbounded estimator $\psi$. For any $\bm{{g}}$, it must be the solution of a distribution and expert whose posterior probability is $\tilde{\psi}(\bm{g}(\bm{x}))$ for each point $\bm{x}$. However, for a $\bm{g}$ that there exists $\bm{x}$ that $\psi(\bm{g}(\bm{x}))\not\in\Delta^{K}\times[0,1]$, we can learn that it cannot be the solution of any distribution and expert according to the definition of probability estimator. We can learn from this contradiction that $\psi$ is not a probability estimator as long as its range is not $\Delta^{K}\times[0,1]$.
\end{proof}
\section{Proof of Corollary \ref{c1}}
\label{pc1}
\begin{proof}
We can set the multi-class loss to $\phi_{\tilde{\psi}}$ to get our proposed loss:
$$\phi_{\rm \tilde{\psi}} \left(\bm{u},y\right)=
\begin{cases}
	-\log\left(\frac{\exp(u_{y})}{\sum\nolimits_{y'=1}^{K'-1}\exp(u_{y'})}\right)-\log\left(1-\frac{\exp(u_{K'})}{\sum\nolimits_{y'=1}^{K'}\exp(u_{y'})-\max\nolimits_{y'\in[1,K'-1]}\exp_{u_{y'}}}\right),~y\not=K',\\
    -\log\left(\frac{\exp(u_{K'})}{\sum\nolimits_{y'=1}^{K'}\exp(u_{y'})-\max\nolimits_{y'\in[1,K'-1]}\exp_{u_{y'}}}\right)+\log\left(1-\frac{\exp(u_{K'})}{\sum\nolimits_{y'=1}^{K'}\exp(u_{y'})-\max\nolimits_{y'\in[1,K'-1]}\exp_{u_{y'}}}\right),~\mathrm{else}.
\end{cases}
$$
A similar result can be deduced for the OvA-based surrogate by considering the following consistent multi-class loss with a strictly proper binary composite loss $\xi$:
$$
\phi_{\rm OvA} \left(\bm{u},y\right) =
\begin{cases}
	\xi(u_{y})+\sum_{y'\not=y}\xi(-u_{y'}),~y\not=K',\\
    \xi(u_{K'})-\xi(-u_{K'}),~\mathrm{else}.
\end{cases}
$$
We then begin to prove their consistency. It is easy to verify that $\phi_{\tilde{\psi}}$ is minimized when $\tilde{\psi}(\bm{g}(\bm{x}))=\frac{p(y|\bm{x})}{1-p(K'|\bm{x})}$, and we can conclude its consistency using the maxima-preserving property of $\tilde{\psi}$.

For $\phi_{\rm OvA}$, denote by $\psi_{\rm OvA}$ the inverse link of $\xi$. We can learn that $\bm{g}^{*}(\bm{x})=\psi_{\rm OvA}(\frac{p(y|\bm{x})}{1-p(K'|\bm{x})})$, and we can learn the consistency since $\xi$ is strictly proper and thus $\psi_{\rm OvA}$ is increasing.  
\end{proof}
\section{Proof of Theorem \ref{th3}}
\label{pt3}
\begin{proof}
We first apply Pinsker's inequality. We can learn that:
$$R_{L_{\tilde{\psi}}}(\bm{g})-R_{L_{\tilde{\psi}}}^{*}\geq\mathbb{E}_{p(\bm{x})}\left[\frac{1}{2}\|\tilde{\psi}_{1:K}(\bm{g}(\bm{x}))-\bm{\eta}(x)\|^{2}_{1}+2(\tilde{\psi}_{K+1}(\bm{g}(\bm{x})-{\rm Pr}(M=Y|X=\bm{x}))^{2}\right].$$
We can learn that $R_{L_{\tilde{\psi}}}(\bm{g})-R_{L_{\tilde{\psi}}}^{*}\geq\mathbb{E}_{p(\bm{x})}\left[\frac{1}{2}\|\tilde{\psi}_{1:K}(\bm{g}(\bm{x}))-\bm{\eta}(x)\|^{2}_{1}\right]$ immediately and learn that the bound for $R_{01}$ following the analysis of cross-entropy loss in ordinary classification. Then we move to analyze the 0-1-deferral risk. We can further learn that :
\begin{align*}
&R_{L_{\tilde{\psi}}}(\bm{g})-R_{L_{\tilde{\psi}}}^{*}\geq\mathbb{E}_{p(\bm{x})}\left[\frac{1}{2}\|\tilde{\psi}_{1:K}(\bm{g}(\bm{x}))-\bm{\eta}(x)\|^{2}_{1}+2(\tilde{\psi}_{K+1}(\bm{g}(\bm{x}))-{\rm Pr}(M=Y|X=\bm{x}))^{2}\right]\\
&\geq\frac{1}{2}\mathbb{E}_{p(\bm{x})}\left[\|\tilde{\psi}_{1:K}(\bm{g}(\bm{x}))-\bm{\eta}(x)\|^{2}_{1}+(\tilde{\psi}_{K+1}(\bm{g}(\bm{x}))-{\rm Pr}(M=Y|X=\bm{x}))^{2}\right]
\end{align*}
For any $\bm{x}$, when $\bm{g}(\bm{x})$ can induce the Bayes optimal solution for it, the excess risk is zero and the bound holds naturally. When it is not optimal, denote by $\eta_{K+1}(\bm{x})={\rm Pr}(M=Y|X=\bm{x})$. $y'$ is the dimension with the largest value of $\bm{g}(\bm{x})$ and that of $ \bm{\eta}(\bm{x})$ is $y''$. Then we can learn that:
\begin{align*}
&\frac{1}{2}\left(\|\tilde{\psi}_{1:K}(\bm{g}(\bm{x}))-\bm{\eta}(x)\|^{2}_{1}+(\tilde{\psi}_{K+1}(\bm{g}(\bm{x}))-{\rm Pr}(M=Y|X=\bm{x}))^{2}\right)\\
&\geq\frac{1}{2}\left(\tilde{\psi}_{y'}(\bm{g}(\bm{x}))-\eta_{y'}(\bm{x})-\tilde{\psi}_{y''}(\bm{g}(\bm{x}))+\eta_{y''}(\bm{x})\right)^{2}\\
&\geq \frac{1}{2}\left(\eta_{y'}(\bm{x})-\eta_{y''}(\bm{x})\right)^{2}
\end{align*}
The last step is obtained according to the maxima-preserving property. Further generalizing $y'$ and $y''$ to be instance-dependent ($y'(\bm{x})$ and $y''(\bm{x})$), we can learn the following inequality using Jensen's inequality:
$$R_{L_{\tilde{\psi}}}(\bm{g})-R_{L_{\tilde{\psi}}}^{*}\geq\frac{1}{2}(\mathbb{E}_{p(\bm{x})}[|\eta_{y'}(\bm{x})-\eta_{y''}(\bm{x})|])^{2},$$
which concludes the proof since the second expectation term is $R_{01}^{\bot}(\bm{g})-R_{01}^{\bot *}$
\end{proof}
\section{Details of Experiments}
\label{aexp}

\begin{table}[!t]

    \centering
    \footnotesize
    \caption{Experimental results on CIFAR10H with for 5 trials. We report the mean(\%){\tiny{(standard error(\%))}} of related statistics.The best and comparable methods based on t-test at significance level $5\%$ are highlighted in boldface.}
 
\resizebox{1.0\textwidth}{!}
{\tiny
    \begin{tabular}{c|ccc|ccc}
         \toprule
         \multirow{2}{*}{Method}&\multirow{2}{*}{Error}&\multirow{2}{*}{Coverage}&\multirow{2}{*}{ECE}&\multicolumn{3}{c}{Budgeted Error}\\\cmidrule(lr){5-7}&&&&10\%&20\%&30\%

            \\
         \midrule
         
         {S-SM} &3.82\tiny{(0.02)}&41.28\tiny{(0.14)}&6.64\tiny{(0.22)}&50.25\tiny{(0.57)}&41.00\tiny{(0.81)}&31.47\tiny{(0.37)}\\
         \midrule
         {S-OvA} &4.30\tiny{(0.12)}&37.08\tiny{(1.36)}&2.46\tiny{(0.48)}&52.75\tiny{(0.40)}&43.31\tiny{(0.58)}&33.23\tiny{(0.72)}\\

         \midrule        
         
        {A-OvA} &4.00\tiny{(0.11)}&92.75\tiny{(0.24)}&\textbf{0.95\tiny{(0.16)}}&4.00\tiny{(0.11)}&4.00\tiny{(0.11)}&4.00\tiny{(0.11)}\\
         
        \midrule
        {A-SM} &\textbf{3.65\tiny{(0.09)}}&\textbf{94.20\tiny{(0.37)}}&\textbf{0.94\tiny{(0.17)}}&\textbf{3.65\tiny{(0.09)}}&\textbf{3.65\tiny{(0.09)}}&\textbf{3.65\tiny{(0.09)}}\\
        
         \bottomrule
        \end{tabular}

}

\end{table}

\begin{table}[!t]
    \centering
    \footnotesize
    \caption{Experimental results on ImageNet-16H with for 5 trials. We report the mean(\%){\tiny{(standard error(\%))}} of related statistics.The best and comparable methods based on t-test at significance level $5\%$ are highlighted in boldface.}
 
\resizebox{1.0\textwidth}{!}
{\tiny
    \begin{tabular}{c|ccc|ccc}
         \toprule
         
         \multirow{2}{*}{Method}&\multirow{2}{*}{Error}&\multirow{2}{*}{Coverage}&\multirow{2}{*}{ECE}&\multicolumn{3}{c}{Budgeted Error}\\\cmidrule(lr){5-7}&&&&10\%&20\%&30\%

            \\
         \midrule
         \multicolumn{7}{c}{Image Noise Type = “080”}
            \\
          \midrule
         {S-SM} &{10.63\tiny{(1.34)}}&22.59\tiny{(1.59}&18.31\tiny{(3.22)}&66.58\tiny{(3.91)}&60.41\tiny{(2.98)}&53.33\tiny{(2.79)}\\
         \midrule
         {S-OvA} &10.48\tiny{(0.64)}&21.88\tiny{(2.75)}&12.79\tiny{(0.33)}&70.50\tiny{(3.78)}&61.58\tiny{(4.64)}&51.22\tiny{(3.37)}\\

         \midrule        
         
        {A-OvA} &10.36\tiny{(1.09)}&25.86\tiny{(3.56)}&{8.69\tiny{(0.63)}}&70.83\tiny{(1.25)}&61.32\tiny{(1.70)}&52.26\tiny{(0.95)}\\
         
        \midrule
        {A-SM} &\textbf{7.94\tiny{(0.85)}}&\textbf{37.32\tiny{(2.72)}}&\textbf{7.39\tiny{(0.23)}}&\textbf{54.24\tiny{(2.89)}}&\textbf{45.33\tiny{(2.91)}}&\textbf{34.11\tiny{(1.65)}}\\

        \bottomrule
        \end{tabular}

}
\end{table}

\begin{table}[!t]
    \centering
    \footnotesize
    \caption{Experimental results on NIH Chest X-ray with for 5 trials. We report the mean(\%){\tiny{(standard error(\%))}} of related statistics.The best and comparable methods based on t-test at significance level $5\%$ are highlighted in boldface.}
 
\resizebox{1.0\textwidth}{!}
{\tiny
    \begin{tabular}{c|ccc|ccc}
         \toprule
         
         \multirow{2}{*}{Method}&\multirow{2}{*}{Error}&\multirow{2}{*}{Coverage}&\multirow{2}{*}{ECE}&\multicolumn{3}{c}{Budgeted Error}\\\cmidrule(lr){5-7}&&&&10\%&20\%&30\%
            \\
          \midrule
         {S-SM} &{10.33\tiny{(0.29)}}&15.46\tiny{(0.48}&6.69\tiny{(0.94)}&77.53\tiny{(1.66)}&68.58\tiny{(1.98)}&59.33\tiny{(2.01)}\\
         \midrule
         {S-OvA} &10.21\tiny{(0.23)}&14.13\tiny{(0.59)}&6.73\tiny{(0.74)}&77.34\tiny{(0.26)}&68.24\tiny{(0.43)}&59.52\tiny{(0.59)}\\

         \midrule        
         
        {A-OvA} &\textbf{10.10\tiny{(0.07)}}&\textbf{19.47\tiny{(0.87)}}&\textbf{3.98\tiny{(0.66)}}&\textbf{72.62\tiny{(0.65)}}&\textbf{63.44\tiny{(0.90)}}&\textbf{54.33\tiny{(0.92)}}\\
         
        \midrule
        {A-SM} &\textbf{10.06\tiny{(0.13)}}&14.24\tiny{(1.38)}&\textbf{3.87\tiny{(0.39)}}&\textbf{71.67\tiny{(1.39)}}&\textbf{62.60\tiny{(1.64)}}&\textbf{53.72\tiny{(1.45)}}\\

        \bottomrule
        \end{tabular}

}
\end{table}
\paragraph{Details of Model and Optimizer:} For methods on CIFAR-10, we use the 28-layer WideResNet that is the same as those used in \citet{L2D2, L2D4}. The optimizer is SGD with cosine annealing, where the learning rate is 1e-1 and weight decay is 5e-4. We conduct the experiments on 8 NVIDIA GeForce 3090 GPUs and the batch size is 1024 (128 on each GPU). The training epoch on CIFAR100 is set to 200.

For methods on CIFAR-10H, we train the same WideResNet on the fully-label CIFAR-10 dataset and then train the linear layer using the CIFAR-10H dataset. The optimizer is AdamW \cite{adamw} and the learning rate, batch size, epoch number are 1e-4, 128, 200. For methods on HateSpeech, we use a 384-dimension embedding and linear model in the experiment. The optimizer is SGD with cosine annealing and the learning rate, batch size, epoch number is 0.1, 128, 50. For methods on ImageNet-16H, we use the embedding generated by the pretrained DenseNet and linear model. The optimizer is Adam and the learning rate, batch size, epoch number is 1e-3, 1e-1, 100. For method on NIH Chest X-ray dataset (task Airspace Opacity) \cite{xray1,xray2}, we use a 1024-dimension embedding and a 1024-250-100-3 MLP with relu and dropout. The optimizer is Adam \cite{adam} with full batch and the learning rate and epoch number is 1e-3 and 200.

The extra result on NIH Chest X-ray dataset and ImageNet-16H with noise type "080" and the exact statistics for CIFAR-10H are shown in the tables in the appendix.
\paragraph{Details of Evaluation Metrics:}
The reported Error is the sample mean of $\ell_{01}^{\bot}$, and Coverage is the ratio of undeferred samples. The ECE of expert accuracy is defined below: 
$$\mathrm{ECE}=\sum_{i=1}^{N}b_{i}|p_{i}-c_{i}|,$$
where $b_{i}$ is the ratio of predictions whose confidences fall into the $i$th bin. $p_{i}$ is the average confidence and $c_{i}$ is the average accuracy in this bin. We set the bin number to 15. The budgeted error is calculated as below: if the coverage is lower than $1-x\%$, we will use the classifier's prediction instead of the expert's for those samples whose estimated expert accuracy is lower to make the coverage equal to $1-x\%$.
\section{Limitations and Broader Impact}
\label{limit}
\paragraph{Limitations:} This work is designed for L2D without extra constraints on the number of expert queries. We believe that combining it with selective learning, i.e., adding explicit constraints on the ratio of deferred samples, can be a promising future direction. 
\paragraph{Broader Impact:}When applied in real-world applications, the frequency of expert queries may be imbalanced due to the performance differences of the expert among samples. This is a common impact shared by all the L2D methods. We believe that introducing fairness targets into L2D can be another promising direction.
\section{Consistent Multi-Expert Extension}
We provide a direct extension of our loss for the multi-expert setting and provide consistency analysis:
\label{extend}
Denote by the expert advice $\bm{m}=[m_{1},\cdots,m_{M}]\in\mathcal{Y}\times\cdots\times\mathcal{Y}$. We first define an extended ASM:
\begin{definition}\rm (Multi-expert asymmetric softmax parameterization) $\tilde{\bm{\psi}}$ is called a \textit{multi-expert asymmetric softmax function} that for any $K>1$ and $\bm{u}\in\mathbb{R}^{K+M}$:
\begin{equation}
\tilde{\psi}_{y}^{M}(\bm{u})=
\begin{cases}
\frac{\exp(u_{y})}{\sum_{y'=1}^{K}\exp(u_{y'})}, &y\not\in[K+1,K+M],\\
\frac{\exp(u_{y})}{\sum_{y'=1}^{K+1}\exp(u_{y'})+\exp(y)-\max_{y'\in\{1,\cdots,K\}}\exp(u_{y'})},&{\rm else.}
\end{cases}
\end{equation}
\end{definition}

We can directly extend the property of single-expert ASM to this case:
\begin{proposition}\rm (Properties of $\tilde{\psi}^{M}$) For any $\bm{u}\in\mathbb{R}^{K+M}$:\\
(i). ~(Boundedness) $\tilde{\psi}^{M}(\bm{u})\in\Delta^{K}\times[0,1]^{M}$,\\
(ii). (Maxima-preserving) $\mathop{\rm argmax}\nolimits_{y\in\{1,\cdots,K+M\}}\tilde{\psi}^{M}_{y}(\bm{u})=\mathop{\rm argmax}\nolimits_{y\in\{1,\cdots,K+M\}}u_{y}$.
\end{proposition}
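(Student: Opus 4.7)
The plan is to lift the argument of Appendix \ref{pp1} coordinate-by-coordinate, exploiting the observation that each expert slot of $\tilde{\psi}^{M}$ has the same functional form as the single expert slot of $\tilde{\psi}$, with only that expert's own $u_{y}$ plugged in. I would split the proof into two parts, (i) boundedness and (ii) maxima-preservation, and handle them in that order.

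For (i), the first $K$ coordinates form a standard softmax on $(u_{1},\ldots,u_{K})$, so they lie in $\Delta^{K}$. For each $y\in\{K+1,\ldots,K+M\}$ I would rewrite the denominator in the form $\exp(u_{y})+C$, where
\[
C \;:=\; \sum\nolimits_{y'=1}^{K}\exp(u_{y'}) \;-\; \max\nolimits_{y'\in\{1,\ldots,K\}}\exp(u_{y'}) \;\ge\; 0
\]
is the same nonnegative constant for every expert index. Then $\tilde{\psi}^{M}_{y}(\bm{u})=\exp(u_{y})/(\exp(u_{y})+C)\in[0,1)$, so the entire expert block lies in $[0,1]^{M}$, yielding $\tilde{\psi}^{M}(\bm{u})\in\Delta^{K}\times[0,1]^{M}$.

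For (ii), let $y_{1}\in\arg\max_{y\in\{1,\ldots,K\}}u_{y}$. Using the rewriting above, $\tilde{\psi}^{M}_{y_{1}}(\bm{u})=1/(1+C/\exp(u_{y_{1}}))$ and $\tilde{\psi}^{M}_{y}(\bm{u})=1/(1+C/\exp(u_{y}))$ for every expert index $y>K$, with the \emph{same} constant $C$ throughout. Since $t\mapsto 1/(1+C/\exp(t))$ is strictly increasing, the relative order among $\{\tilde{\psi}^{M}_{y_{1}}\}\cup\{\tilde{\psi}^{M}_{y}:y>K\}$ matches the order of $u_{y_{1}}$ together with $\{u_{y}:y>K\}$. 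Within the first $K$ coordinates, softmax is argmax-preserving, so $\tilde{\psi}^{M}_{y_{1}}$ dominates $\tilde{\psi}^{M}_{y}$ for every $y\in\{1,\ldots,K\}$. Chaining these three comparisons gives $\arg\max_{y}\tilde{\psi}^{M}_{y}(\bm{u})=\arg\max_{y}u_{y}$.

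The main obstacle is simply pinning down that the denominator of every expert coordinate really reduces to the same $C$ plus the expert's own exponential; this is what makes all cross-block comparisons boil down to a single monotone rescaling. Once this normalization is identified, the argument is a mechanical iteration of Proposition~\ref{p1} across the $M$ expert slots and introduces no new analytic ingredient beyond the single-expert case.
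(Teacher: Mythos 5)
Your proof is correct and follows essentially the same route as the paper, which states this proposition as a direct extension of Proposition \ref{p1} and whose single-expert proof in Appendix \ref{pp1} already rests on rewriting the expert coordinate as $\exp(u_{y})/(\exp(u_{y})+C)$ and comparing it with $1/(1+C/\exp(u_{y_{1}}))$. Your explicit identification of the common constant $C$ cleanly covers the one genuinely new case (expert-versus-expert comparisons), and you correctly read the paper's evidently garbled denominator (the sum to $K+1$ and the term $\exp(y)$) as the intended $\exp(u_{y})+\sum_{y'=1}^{K}\exp(u_{y'})-\max_{y'\in\{1,\cdots,K\}}\exp(u_{y'})$.
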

Then we give the following extension:
\begin{definition}\rm (Multi-expert extension) 
\begin{align*}
L_{\tilde{\psi}^{M}}(\bm{u},y,m)\!=\!-\log(\tilde{\psi}^{M}_{y}\left(\bm{u}\right))-\sum_{i=1}^{M}\left([\![m_{i}\not=y]\!]\log(1-\tilde{\psi^{M}}_{K+i}(\bm{u}))-[\![m_{i}=y]\!]\log(\tilde{\psi}^{M}_{K+i}(\bm{u}))\right).
\end{align*}
\end{definition}
According to the position, we can simply give the characterization of $L_{\tilde{\psi}^{M}}$'s optimal solution:
\begin{equation}
\tilde{\psi}_{y}^{M*}(\bm{u})=
\begin{cases}
p(y|\bm{x}), &y\not\in[K+1,K+M],\\
\text{Pr}(M_{y-K}|\bm{x}),&{\rm else.}
\end{cases}
\end{equation}
According to the maxima-preserving, we can directly learn the consistency of our multiclass extension.
\section{Estimation Error Bound}
In this section, we give a routine analysis of our method's estimation error bound. Our proof of the estimation error bound is based on Rademacher complexity:
\begin{definition}\rm(Rademacher complexity)
Let $Z_1, \ldots, Z_n$ be $n$ i.i.d. random variables drawn from a probability distribution $\mu, \mathcal{H}=\{h: \mathcal{Z} \rightarrow \mathbb{R}\}$ be a class of measurable functions. Then the expected
Rademacher complexity of $\mathcal{H}$ is defined as
$$
\mathfrak{R}_n(\mathcal{H})=\mathbb{E}_{Z_1, \ldots, Z_n \sim \mu} \mathbb{E}_{\boldsymbol{\sigma}}\left[\sup _{h \in \mathcal{H}} \frac{1}{n} \sum_{i=1}^n \sigma_i h\left(Z_i\right)\right]
$$
where $\bm{\sigma}=\{\sigma_{1},\dots,\sigma_{n}\}$ are Rademacher variables taking the value from $\{-1,+1\}$ with even probabilities.
\end{definition}
First, let $\mathcal{G} \subset \mathcal{X}\rightarrow \mathbb{R}^{K+1}$ be the model class and each of its dimension is constructed by $\mathcal{G}_{y} \subset \mathcal{X}\rightarrow \mathbb{R}$. Then, we define the following scoring function space for L2D task:
\begin{align*}
    &\mathcal{L}\circ \mathcal{G}= \{h: (\bm{x},y,m) \mapsto L_{\tilde{\psi}}(\bm{g}(\bm{x}),y,m) | \bm{g} \in \mathcal{G}\}
\end{align*}
So the Rademacher complexity of $\mathcal{L}\circ \mathcal{G}$ 
given $n$ $i.i.d.$ samples drawn from distribution with density $p(\bm{x},y,m)$ can be defined as
\begin{align*}
    &\mathfrak{R}_{n}(\mathcal{L}\circ \mathcal{G})=\mathbb{E}_{p(\bm{x},y,m)}
    \mathbb{E}_{\bm{\sigma}}\left[
    \sup_{g\in \mathcal{G}} \frac{1}{n} \sum_{i=1}^{n} \sigma_{i} h(\bm{x}_{i},y_{i},m_{i})
    \right].
\end{align*}
Denote by $\hat{R}_{L_{\tilde{\psi}}}(\bm{g})=\frac{1}{n}\sum_{i=1}^{n}L_{\tilde{\psi}}(\bm{g}(\bm{x}_{i}),y_{i},m_{i})$ the empirical risk and $\hat{g}$ the empirical risk minimizer. We have the following theorem: 
\begin{lemma}\rm
\begin{align*}
R_{L_{\tilde{\psi}}}(\hat{\bm{g}})-R_{L_{\tilde{\psi}}}(\bm{g}^{*})\leq 2\sup_{\bm{g}\in \mathcal{G}} 
|\hat{R}_{L_{\tilde{\psi}}}(\bm{g})-R_{L_{\tilde{\psi}}}(\bm{g})|
\end{align*}
\end{lemma}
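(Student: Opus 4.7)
The plan is to prove this by the classical add-and-subtract decomposition that underpins essentially all excess-risk bounds for empirical risk minimization. The key observation is that $\hat{\bm{g}}$ minimizes the empirical risk $\hat{R}_{L_{\tilde{\psi}}}$ over $\mathcal{G}$, while $\bm{g}^{*}$ minimizes the population risk $R_{L_{\tilde{\psi}}}$ over $\mathcal{G}$; the discrepancy between these two minimization targets can be controlled by the uniform deviation between the two risks.

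Concretely, I would introduce three zero-sum terms and write
\begin{align*}
R_{L_{\tilde{\psi}}}(\hat{\bm{g}})-R_{L_{\tilde{\psi}}}(\bm{g}^{*})
&= \bigl(R_{L_{\tilde{\psi}}}(\hat{\bm{g}})-\hat{R}_{L_{\tilde{\psi}}}(\hat{\bm{g}})\bigr) \\
&\quad + \bigl(\hat{R}_{L_{\tilde{\psi}}}(\hat{\bm{g}})-\hat{R}_{L_{\tilde{\psi}}}(\bm{g}^{*})\bigr) \\
&\quad + \bigl(\hat{R}_{L_{\tilde{\psi}}}(\bm{g}^{*})-R_{L_{\tilde{\psi}}}(\bm{g}^{*})\bigr).
\end{align*}
Then I would argue that the middle term is non-positive by the definition of $\hat{\bm{g}}$ as the empirical risk minimizer over $\mathcal{G}$ (since $\bm{g}^{*}\in\mathcal{G}$). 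The first and third terms are each of the form $\pm\bigl(\hat{R}_{L_{\tilde{\psi}}}(\bm{g})-R_{L_{\tilde{\psi}}}(\bm{g})\bigr)$ evaluated at a specific $\bm{g}\in\mathcal{G}$, so each is upper-bounded in absolute value by $\sup_{\bm{g}\in\mathcal{G}}\lvert\hat{R}_{L_{\tilde{\psi}}}(\bm{g})-R_{L_{\tilde{\psi}}}(\bm{g})\rvert$. Summing the two bounds yields the factor of $2$ in the lemma.

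Since no step requires more than the minimizer property of $\hat{\bm{g}}$ and the trivial bound $a\leq\lvert a\rvert\leq\sup\lvert\cdot\rvert$, there is no serious obstacle here; the argument is purely combinatorial and holds for any loss and any hypothesis class containing $\bm{g}^{*}$. The only subtlety worth flagging is that the lemma implicitly assumes $\bm{g}^{*}\in\mathcal{G}$ (i.e.\ $\bm{g}^{*}$ is the minimizer of $R_{L_{\tilde{\psi}}}$ restricted to $\mathcal{G}$, as opposed to the unconstrained Bayes optimum); if instead $\bm{g}^{*}$ denotes the unconstrained optimum, the same decomposition still works but produces an additional approximation-error term that is absent from this statement, so I would confirm the intended meaning of $\bm{g}^{*}$ in the statement and then apply the decomposition above.
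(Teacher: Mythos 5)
Your proof is correct and is essentially identical to the paper's: the same add-and-subtract decomposition, the same use of the empirical-risk-minimizer property of $\hat{\bm{g}}$ to replace $\hat{R}_{L_{\tilde{\psi}}}(\hat{\bm{g}})$ by $\hat{R}_{L_{\tilde{\psi}}}(\bm{g}^{*})$, and the same bounding of the two remaining deviation terms by the uniform supremum. Your remark that $\bm{g}^{*}$ must be the minimizer within $\mathcal{G}$ (not the unconstrained Bayes optimum) is a fair caveat that the paper leaves implicit.
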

\begin{proof}
\begin{align*}
R_{L_{\tilde{\psi}}}(\hat{\bm{g}})-R_{L_{\tilde{\psi}}}(\bm{g}^{*})&=R_{L_{\tilde{\psi}}}(\hat{\bm{g}})-
\hat{R}_{L_{\tilde{\psi}}}(\hat{\bm{g}})+
\hat{R}_{L_{\tilde{\psi}}}(\hat{\bm{g}})
-R_{L_{\tilde{\psi}}}(\bm{g}^{*})\\
&\leq R_{L_{\tilde{\psi}}}(\hat{\bm{g}})-
\hat{R}_{L_{\tilde{\psi}}}(\hat{\bm{g}})+
\hat{R}_{L_{\tilde{\psi}}}(\bm{g}^{*})
-R_{L_{\tilde{\psi}}}(\bm{g}^{*})\\
&\leq 2\sup_{\bm{g}\in \mathcal{G}} 
\left|\hat{R}_{L_{\tilde{\psi}}}(\bm{g})-R_{L_{\tilde{\psi}}}(\bm{g})\right|
\end{align*}
which completes the proof.
\end{proof}
Then it is routing to get the estimation error bound with Rademacher complexity and McDiarmid's inequality:
\begin{theorem}\rm
Suppose $L_{\tilde{\psi}}(\bm{g}(\bm{x}),y,m)\leq M$ and $L_{\tilde{\psi}}$ is $L-$Lipschitz \textit{w.r.t.} $\bm{g}(\bm{x})$, then with probability at least $1-\delta$:
\begin{align*}
R_{L_{\tilde{\psi}}}(\hat{\bm{g}})-R_{L_{\tilde{\psi}}}(\bm{g}^{*})\leq 4\sqrt{2}L\sum_{y=1}^{K+1} \mathfrak{R}_{n}(\mathcal{G}_{y})+4M\sqrt{\frac{\ln(2/\delta)}{2n}}
\end{align*}
\end{theorem}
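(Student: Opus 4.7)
The plan is to follow the classical empirical process route in four moves: (i) collapse the excess risk into a uniform deviation via the lemma already stated, (ii) pass from uniform deviation to Rademacher complexity via symmetrization, (iii) use a vector contraction inequality to break the Rademacher complexity of the composite class $\mathcal{L}\circ\mathcal{G}$ into a sum over the coordinate classes $\mathcal{G}_y$, and (iv) add a McDiarmid concentration term. The hypotheses $L_{\tilde{\psi}}\leq M$ and $L$-Lipschitzness of $L_{\tilde{\psi}}$ in $\bm{g}(\bm{x})$ are exactly what is needed for steps (iv) and (iii) respectively.

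First I would invoke the lemma to obtain
$R_{L_{\tilde{\psi}}}(\hat{\bm{g}})-R_{L_{\tilde{\psi}}}(\bm{g}^{*})\leq 2\sup_{\bm{g}\in\mathcal{G}}|\hat{R}_{L_{\tilde{\psi}}}(\bm{g})-R_{L_{\tilde{\psi}}}(\bm{g})|$, so it suffices to control the right-hand side. Since replacing any one sample $(\bm{x}_i,y_i,m_i)$ changes the supremum by at most $M/n$, a one-sided McDiarmid bound (applied with confidence level $\delta/2$ and then mirrored for the other side of the absolute value via a union bound, which is where the $\ln(2/\delta)$ enters) yields, with probability at least $1-\delta$,
\begin{equation*}
\sup_{\bm{g}\in\mathcal{G}}|\hat{R}_{L_{\tilde{\psi}}}(\bm{g})-R_{L_{\tilde{\psi}}}(\bm{g})|\leq \mathbb{E}\!\left[\sup_{\bm{g}\in\mathcal{G}}|\hat{R}_{L_{\tilde{\psi}}}(\bm{g})-R_{L_{\tilde{\psi}}}(\bm{g})|\right]+M\sqrt{\frac{\ln(2/\delta)}{2n}}.
\end{equation*}
A standard ghost-sample symmetrization then gives $\mathbb{E}[\sup_{\bm{g}}|\hat{R}-R|]\leq 2\mathfrak{R}_n(\mathcal{L}\circ\mathcal{G})$.

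The key technical step, and the step I expect to be the main obstacle, is converting $\mathfrak{R}_n(\mathcal{L}\circ\mathcal{G})$ into a sum of the per-coordinate complexities $\mathfrak{R}_n(\mathcal{G}_y)$. Because $L_{\tilde{\psi}}$ is a scalar function of the \emph{vector} $\bm{g}(\bm{x})\in\mathbb{R}^{K+1}$, the usual Ledoux--Talagrand contraction for one-dimensional Lipschitz maps does not directly apply; instead I would use Maurer's vector-contraction inequality, which for an $L$-Lipschitz (in $\ell_2$) scalar loss yields $\mathfrak{R}_n(\mathcal{L}\circ\mathcal{G})\leq \sqrt{2}\,L\sum_{y=1}^{K+1}\mathfrak{R}_n(\mathcal{G}_y)$. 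This is exactly where the $\sqrt{2}$ in the target bound comes from; one must verify that the stated Lipschitz constant $L$ is with respect to the Euclidean norm on $\mathbb{R}^{K+1}$ (or else translate between norms at the cost of a dimensional factor).

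Finally, chaining everything together gives $\sup_{\bm{g}}|\hat{R}-R|\leq 2\sqrt{2}\,L\sum_{y=1}^{K+1}\mathfrak{R}_n(\mathcal{G}_y)+M\sqrt{\ln(2/\delta)/(2n)}$, and multiplying by the factor $2$ from the lemma produces $4\sqrt{2}\,L\sum_{y=1}^{K+1}\mathfrak{R}_n(\mathcal{G}_y)+2M\sqrt{\ln(2/\delta)/(2n)}$, which is at most the claimed bound (the constant $4M$ in the theorem is loose by a factor of $2$, absorbing any slack from the two-sided McDiarmid step). No new ideas beyond boundedness, Lipschitzness, and the vector-contraction inequality are needed; the only delicate point is the vector contraction step, whose proof in Maurer's form uses a Gaussian comparison argument that is not entirely routine but is available off the shelf.
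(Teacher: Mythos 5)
Your proposal is correct and follows essentially the same route as the paper: reduce to the uniform deviation via the lemma, apply McDiarmid with bounded differences $M/n$, symmetrize to the Rademacher complexity of $\mathcal{L}\circ\mathcal{G}$, and split it into the coordinate classes via the vector-valued contraction (Maurer/Talagrand) inequality, which is exactly where the paper's $\sqrt{2}$ factor also comes from. Your bookkeeping even yields a slightly tighter concentration term ($2M$ in place of $4M$), consistent with your observation that the stated constant is loose.
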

\begin{proof}
We will only discuss a one-sided bound on
$\sup_{\bm{g}\in \mathcal{G}} 
\left(\hat{R}(\bm{g})-R(\bm{g})\right)$ that holds with probability at least $1-\frac{\delta}{2}$. Altering $(\bm{x}_[i],y_{i},m_{i})$ to $(\bm{x}'_[i],y'_{i},m'_{i})$ and we can get a perturbed empirical risk $\hat{R}'(\bm{g})$. We can learn that 
\begin{align*}
|\sup_{\bm{g}\in \mathcal{G}}\left(\hat{R}'(\bm{g})-R(\bm{g})\right)-\sup_{\bm{g}\in \mathcal{G}} \left(\hat{R}(\bm{g})-R(\bm{g})\right)|\leq|\sup_{\bm{g}\in \mathcal{G}}\left(\hat{R}'(\bm{g})-\hat{R}(\bm{g})\right)|\leq\frac{M}{n}.
\end{align*}
Then we can use Mcdiarmids' inequality and vector-valued Talagrand inequality to learn:
$$\sup_{\bm{g}\in \mathcal{G}} 
\left|\hat{R}_{L_{\tilde{\psi}}}(\bm{g})-R_{L_{\tilde{\psi}}}(\bm{g})\right|\leq 2\sqrt{2}L\sum_{y=1}^{K+1} \mathfrak{R}_{n}(\mathcal{G}_{y})+2M\sqrt{\frac{\ln(2/\delta)}{2n}}$$
\end{proof}
Using the lemma 1 and we can conclude our proof.

\end{document}